\documentclass[lettersize,journal]{IEEEtran}
\usepackage{amsmath,amsfonts}
\usepackage{algorithmic}
\usepackage{algorithm}
\usepackage{array}
\usepackage[caption=false,font=normalsize,labelfont=sf,textfont=sf]{subfig}
\usepackage{textcomp}
\usepackage{stfloats}
\usepackage{url}
\usepackage{verbatim}
\usepackage{graphicx}
\usepackage{cite}
\usepackage{xcolor}
\usepackage{colortbl}
\usepackage{arydshln}       

\usepackage{amsthm}
\usepackage{booktabs}
\usepackage{multirow}
\theoremstyle{plain}     
\newtheorem{theorem}{Theorem}    
\newtheorem{corollary}{Corollary}
\newtheorem{assumption}{Assumption}
\newtheorem{definition}[theorem]{Definition}

\usepackage{makecell}

\usepackage{algorithm}
\usepackage{algorithmic}

\usepackage{subcaption}     
\usepackage{wrapfig}        
\usepackage{float}          

\newtheorem{proposition}{Proposition}

\definecolor{mylightgray}{gray}{0.9}
\definecolor{mygreen}{RGB}{0,150,0}
\usepackage[colorlinks=true, linkcolor=blue, citecolor=blue]{hyperref}

\begin{document}

\title{Learning Transferable Topology Priors for Multi-Agent LLM Collaboration Across Domains}


\author{
    Taolin Zhang$^{1}$,
    Zijie Zhou$^2$,
    Jiuheng Wan$^{1}$,
    Tingyuan Hu$^3$,
    Chengyu Wang$^4$,
    Xiaofeng He$^3$ \IEEEmembership{Member, IEEE},
    and Richang Hong$^1$ \IEEEmembership{Senior Member, IEEE}
\thanks{
    $^1$ Taolin Zhang, Jiuheng Wan, and Richang Hong are with the Hefei University of Technology, Hefei 230002, China (e-mail: tlzhang@hfut.edu.cn; wan\_jiuheng@163.com; hongrc@hfut.edu.cn).\\
    $^2$ Zijie Zhou is with the China University of Petroleum (Beijing), Beijing 102249, China (e-mail: zjzhouzh@gmail.com).\\
    $^3$ Tingyuan Hu and Xiaofeng He are with the East China Normal University, Shanghai 200062, China (e-mail: 10245102409@stu.ecnu.edu.cn; hexf@cs.ecnu.edu.cn).\\
    $^4$ Chengyu Wang is with the Alibaba Group, Hangzhou 310052, China (e-mail: chengyu.wcy@alibaba-inc.com). \\
    Corresponding authors: Chengyu Wang and Richang Hong.}
}

\markboth{IEEE TRANSACTIONS ON KNOWLEDGE AND DATA ENGINEERING, VOL. XX, NO. XX, XXXX 202X}%
{Shell \MakeLowercase{\textit{et al.}}: A Sample Article Using IEEEtran.cls for IEEE Journals}




\maketitle

\begin{abstract}
Large language model (LLM)-based multi-agent systems have shown strong potential for complex reasoning by coordinating specialized agents through structured communication. However, existing topology-evolution methods typically construct or optimize a collaboration topology for each query from scratch, leading to substantial online search overhead, high inference-time token consumption, and limited scalability in multi-domain settings.
We propose \texttt{TopoPrior}, a framework for learning transferable topology priors for multi-agent LLM collaboration across domains. Rather than repeatedly searching for effective collaboration structures online, \texttt{TopoPrior} learns reusable topology priors from reference collaboration graphs collected offline from multiple domains and uses them to generate query-conditioned initial collaboration graphs for downstream refinement. By shifting part of topology search from per-query online optimization to offline prior learning, \texttt{TopoPrior} amortizes search cost while remaining compatible with existing topology-evolution backbones.
Technically, \texttt{TopoPrior} contains two key components. First, a \textbf{transferable topology prior learning} module employs a conditional variational graph framework to capture reusable structural regularities across domains in a latent space. Second, a \textbf{query-conditioned latent adaptation} module introduces adversarial alignment to reduce unnecessary domain discrepancy while preserving query-relevant structural variation.
Experiments on multi-domain reasoning benchmarks show that \texttt{TopoPrior} consistently improves several heterogeneous topology-evolution backbones while reducing online inference-time token usage, with only modest additional trainable parameters. These results suggest that transferable topology initialization is an effective and lightweight mechanism for improving the efficiency of multi-agent LLM collaboration across domains.
\end{abstract}

\begin{IEEEkeywords}
Large language models, multi-agent systems, transferable topology prior learning, topology initialization, topology evolution, multi-domain reasoning.
\end{IEEEkeywords}

\section{Introduction}
\IEEEPARstart{L}{arge} language model (LLM)-based multi-agent systems have emerged as a promising paradigm for complex reasoning by coordinating multiple specialized agents through structured communication. This paradigm is particularly appealing in multi-domain settings, such as healthcare~\cite{medical_tc}, science~\cite{DBLP:conf/iclr/HendrycksBBZMSS21,DBLP:conf/nips/HuangBZZZSLLZLF23}, and law~\cite{DBLP:conf/lrec/NghiemBFA22,DBLP:journals/corr/abs-2510-08524}, where different queries may require different combinations of expertise and collaboration patterns. A central challenge in this setting is how to \emph{reuse} effective collaboration structures across domains, so that multi-agent systems can adapt to new tasks without repeatedly searching for communication topologies from scratch.

Existing approaches to multi-domain LLM reasoning can be broadly grouped into three categories, each with different trade-offs in flexibility, effectiveness, and computational cost. \textit{Training-free prompting} methods adapt frozen LLMs through prompt engineering, few-shot demonstrations, and chain-of-thought reasoning~\cite{DBLP:conf/nips/ChenHWJ23,DBLP:conf/emnlp/NguyenLZZY23,DBLP:conf/emnlp/Hu0YXW024}. Although easy to deploy, their performance is ultimately limited by the capability of the underlying model~\cite{DBLP:conf/acl/ZhangWQLZHXLY025}. \textit{Fine-tuning}-based methods specialize models to target domains through parameter updates~\cite{DBLP:conf/emnlp/ScialomCM22,DBLP:journals/corr/abs-2502-04380}, but they may suffer from catastrophic forgetting and introduce substantial training and storage overhead when many domains are involved~\cite{DBLP:journals/pami/LiYDZS24,DBLP:conf/icml/Liang00LWCH25}. By contrast, \textit{multi-agent collaboration} improves reasoning by decomposing tasks across specialized agents and organizing their interactions through communication topologies. Recent work has progressed from static topologies to dynamic graph construction, including reinforcement-learned, pruning-based, and autoregressive approaches~\cite{DBLP:conf/iclr/HongZCZCWZWYLZR24,liu2024,DBLP:conf/iclr/ZhangYLYWWCY025,DBLP:conf/icml/ZhangYSWYF00C25}. However, most of these methods treat topology construction as a query-level optimization problem and often perform search from scratch, which can incur substantial online overhead, accumulated communication noise, and high inference-time token consumption in multi-domain settings~\cite{li-etal-2025-multi-agent}; see Fig.~\ref{motivation_fig}.

\begin{figure*}[t!]
\centering
\includegraphics[width=17cm]{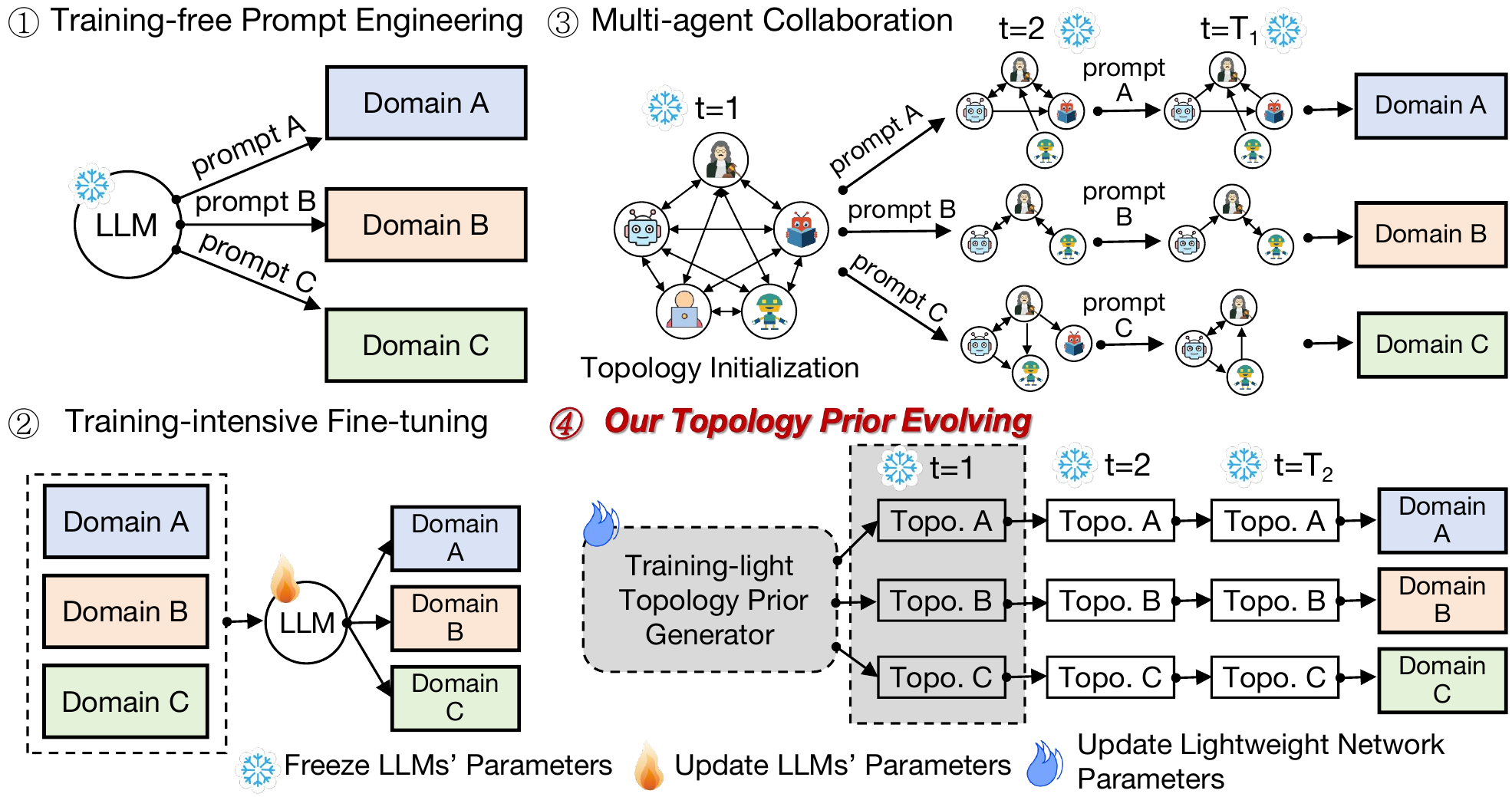}
\caption{Comparison of reasoning paradigms across domains.
\textbf{(1) Training-free methods} rely on frozen LLMs and are ultimately limited by the intrinsic capability of the underlying model.
\textbf{(2) Training-intensive methods} update model parameters, but may incur catastrophic forgetting and substantial computational overhead across domains.
\textbf{(3) Multi-agent methods} dynamically construct collaboration topologies, yet repeatedly optimizing graphs from scratch for each query can incur high online token cost.
Our method learns transferable topology priors that provide query-aware initialization for downstream topology evolution.}
\label{motivation_fig}
\vspace{-1.5em}
\end{figure*}

This limitation motivates a different perspective: \textit{instead of treating every query as a new topology-search problem, can we learn reusable collaboration patterns offline and use them to initialize downstream topology evolution?} To this end, we propose \texttt{TopoPrior}, a framework for learning transferable topology priors for multi-agent LLM collaboration across domains. The key idea is to shift part of the topology-search burden from per-query online graph construction to offline cross-domain prior learning. Rather than evolving a collaboration graph from scratch for every incoming query, \texttt{TopoPrior} learns reusable topology priors from reference collaboration graphs collected across multiple domains, and then uses these priors to generate query-conditioned initial collaboration graphs for downstream refinement. In this way, \texttt{TopoPrior} amortizes part of the topology-search cost across queries and domains while remaining compatible with existing topology-evolution backbones.

\texttt{TopoPrior} contains two key components.
\textbf{(1) Transferable Topology Prior Learning.}
We develop a conditional variational graph framework to capture reusable structural regularities in collaboration graphs across domains. Specifically, a variational encoder maps collaboration graphs and their associated queries into a latent space, while a conditional generator reconstructs query-conditioned initial topologies from the learned prior and the input query. This design enables \texttt{TopoPrior} to model collaboration structures that can be reused across related domains, rather than relearned independently for each query.
\textbf{(2) Query-Conditioned Latent Adaptation.}
While transferable priors can improve initialization efficiency, effective collaboration still requires sensitivity to query-relevant structural variation. We therefore introduce an adversarial latent adaptation module to reduce unnecessary domain discrepancy while preserving query-dependent structural characteristics. As a result, the generated topologies reflect both reusable collaboration regularities and task-specific specialization.

We evaluate \texttt{TopoPrior} on MMLU~\cite{DBLP:conf/iclr/HendrycksBBZMSS21} and C-Eval~\cite{DBLP:conf/nips/HuangBZZZSLLZLF23} under multi-domain settings. Experimental results show that integrating \texttt{TopoPrior} with existing topology-evolution backbones improves downstream performance in most evaluated settings, reduces online inference-time token usage by up to \textbf{40.22\%}, and introduces only \textbf{3.3M} additional trainable parameters for the topology-prior generator. These results suggest that transferable topology initialization can serve as an effective and lightweight mechanism for improving the efficiency of multi-agent LLM collaboration across domains.

The main contributions of this work are threefold:
\begin{itemize}
    \item We formulate topology initialization for multi-agent LLM systems in multi-domain settings as a transferable topology prior learning problem, in which reusable collaboration structures are learned offline and reused to improve downstream topology evolution.
    \item We propose \texttt{TopoPrior}, a lightweight framework that combines conditional variational topology prior learning with query-conditioned latent adaptation to generate informative initial collaboration graphs.
    \item We conduct experiments on MMLU and C-Eval with multiple topology-evolution backbones, showing that \texttt{TopoPrior} improves downstream reasoning performance and online inference token efficiency across the evaluated settings with modest parameter overhead.
\end{itemize}

\section{Related Work}

\subsection{Multi-Domain Reasoning with LLMs}
Reasoning across multiple domains has become an increasingly important topic with the rise of LLMs~\cite{DBLP:conf/acl/KwanZ0SLJS0W24,DBLP:conf/acl/Chen0ZC0C24,DBLP:conf/acl/0004SSDYGSANMP25}. Existing approaches mainly follow two paradigms. \emph{Training-free methods} adapt frozen LLMs through in-context learning, prompt engineering, and chain-of-thought reasoning~\cite{DBLP:conf/nips/ChenHWJ23,multi_domain_kt,DBLP:conf/emnlp/NguyenLZZY23,DBLP:conf/emnlp/Hu0YXW024}. They are easy to deploy and avoid parameter updates, but their performance is ultimately limited by the capability of the underlying model. \emph{Training-intensive methods} adapt LLMs through supervised fine-tuning or parameter-efficient tuning such as LoRA~\cite{DBLP:conf/iclr/HuSWALWWC22}. Although effective for domain specialization, they may suffer from catastrophic forgetting and require additional storage and maintenance when many domains are involved~\cite{DBLP:journals/corr/abs-2410-10181,DBLP:journals/corr/abs-2509-16882,DBLP:conf/icml/Liang00LWCH25,DBLP:journals/pami/LiYDZS24}. In contrast to prompt-level or parameter-level adaptation, our work studies a different axis of adaptation, namely structural adaptation at the level of inter-agent communication. Specifically, rather than modifying prompts or updating LLM parameters, we improve multi-domain reasoning efficiency by learning reusable topology priors for multi-agent collaboration.

\subsection{Topology Design in Multi-Agent LLM Systems}
Multi-agent LLM systems improve reasoning by organizing agent interactions through communication topologies. Early methods mainly adopt predefined interaction patterns, including independent aggregation~\cite{DBLP:conf/acl/Jiang0L23,DBLP:conf/acl/ZhangX0LHD24,DBLP:conf/icml/Du00TM24}, chain-based communication~\cite{DBLP:journals/corr/abs-2310-02003,DBLP:conf/acl/QianLLCDL0CSCXL24,DBLP:conf/iclr/HongZCZCWZWYLZR24}, star-style coordination~\cite{wu2024autogen,DBLP:journals/corr/abs-2311-13373}, and tree-structured hierarchies~\cite{DBLP:journals/corr/abs-2404-02183}. While effective, these fixed designs are often inflexible when task requirements vary across inputs or domains.

More recent studies dynamically construct or optimize collaboration graphs. GPTSwarm~\cite{DBLP:conf/icml/ZhugeWKFKS24} and DyLAN~\cite{liu2024} adapt agent interactions through reinforcement learning or dynamic agent selection. Pruning-based methods remove redundant edges or agents to obtain task-adaptive sparse graphs~\cite{DBLP:conf/iclr/ZhangYLYWWCY025,DBLP:conf/acl/WangW00Z0025}, while autoregressive approaches generate collaboration structures conditioned on the input query~\cite{DBLP:conf/icml/ZhangYSWYF00C25,DBLP:conf/aaai/LiLWZP26}. Although these methods have shown strong reasoning performance, most of them construct or optimize topologies at query time and often search from scratch, incurring substantial online search cost and inference-time token overhead. Our work is complementary to this line of research: rather than replacing downstream topology evolution, we learn transferable topology priors that provide stronger initialization for it.

\subsection{Transferable Structure Learning and Graph Initialization}
Reusing structural knowledge across tasks or domains is important for efficient adaptation. Prior work in domain generalization and representation transfer suggests that shared latent structure can support transfer across heterogeneous settings~\cite{dua-etal-2023-adapt,DBLP:journals/pami/LiYDZS24,dey-lal-2025-transferability}. In graph representation and generation learning, variational and conditional generative frameworks have been used to model reusable structural patterns from observed graphs~\cite{DBLP:journals/corr/KipfW16a,cao-etal-2025-graphinsight}. More broadly, warm-start initialization and structure reuse have long been recognized as practical strategies for reducing optimization cost in complex search problems \cite{DBLP:conf/wacv/GoswamiSWS23,DBLP:conf/icml/AngellM24}.
However, existing multi-agent LLM methods treat topology construction as a query-level search problem without learning transferable priors across domains, thus failing to amortize reusable collaboration patterns and keeping graph construction query-specific.

In contrast, our work introduces two key differences: (1) learning a \emph{transferable topology prior} for graph initialization instead of searching from scratch per query; (2) combining prior learning with query-conditioned latent adaptation to preserve both cross-domain regularities and query-specific specialization. This design amortizes topology-search cost across domains and enhances downstream efficiency.

\section{TopoPrior: Framework}
In this section, we formalize the problem of transferable topology prior learning for multi-agent LLM collaboration in multi-domain settings and present the proposed \texttt{TopoPrior} framework. An overview of \texttt{TopoPrior} is shown in Fig.~\ref{model_fig}.

\subsection{Problem Definition}
Assume that we are given training data from $K$ domains, denoted by $\mathcal{D}=\bigcup_{k=1}^{K}\mathcal{D}^{k}$, where $\mathcal{D}^{k}=\{(q_i^{k},\mathbb{G}_i^{k},y_i^{k})\}_{i=1}^{M_k}$. Here, $q_i^{k}$ is a query from the $k$-th domain, $y_i^{k}$ is the corresponding ground-truth answer, and $M_k$ is the number of samples in that domain. $\mathbb{G}_i^{k}=(\mathcal{V}_i^{k},\mathcal{E}_i^{k})$ denotes a reference collaboration graph produced by a strong topology-evolution method.\footnote{In our implementation, we use AgentDropout~\cite{DBLP:conf/acl/WangW00Z0025} to construct reference collaboration graphs.} The reference graph is not assumed to be globally optimal; rather, it serves as an effective but imperfect source of structural supervision for learning transferable topology priors across domains.

Let $\mathcal{A}_i^{k}\in\mathbb{R}^{N\times N}$ denote the adjacency matrix derived from $\mathbb{G}_i^{k}$, where $N$ is the size of the candidate role pool and serves as an upper bound on the number of generated agent nodes. The goal of \texttt{TopoPrior} is to learn a query-conditioned topology initializer that maps a new query $q$ to an initial collaboration graph $\hat{\mathbb{G}}$, such that $\hat{\mathbb{G}}$ provides an informative starting point for downstream topology evolution. At inference time, the learned initializer is integrated with an existing topology-evolution backbone, which further refines $\hat{\mathbb{G}}$ according to its original task-specific optimization process. In our framework, topology-prior learning primarily uses query--graph pairs $(q_i^{k},\mathbb{G}_i^{k})$, while the task labels $y_i^{k}$ are retained for reference-graph construction and downstream task evaluation.

\begin{figure*}[!t]
\centering
\includegraphics[width=17cm]{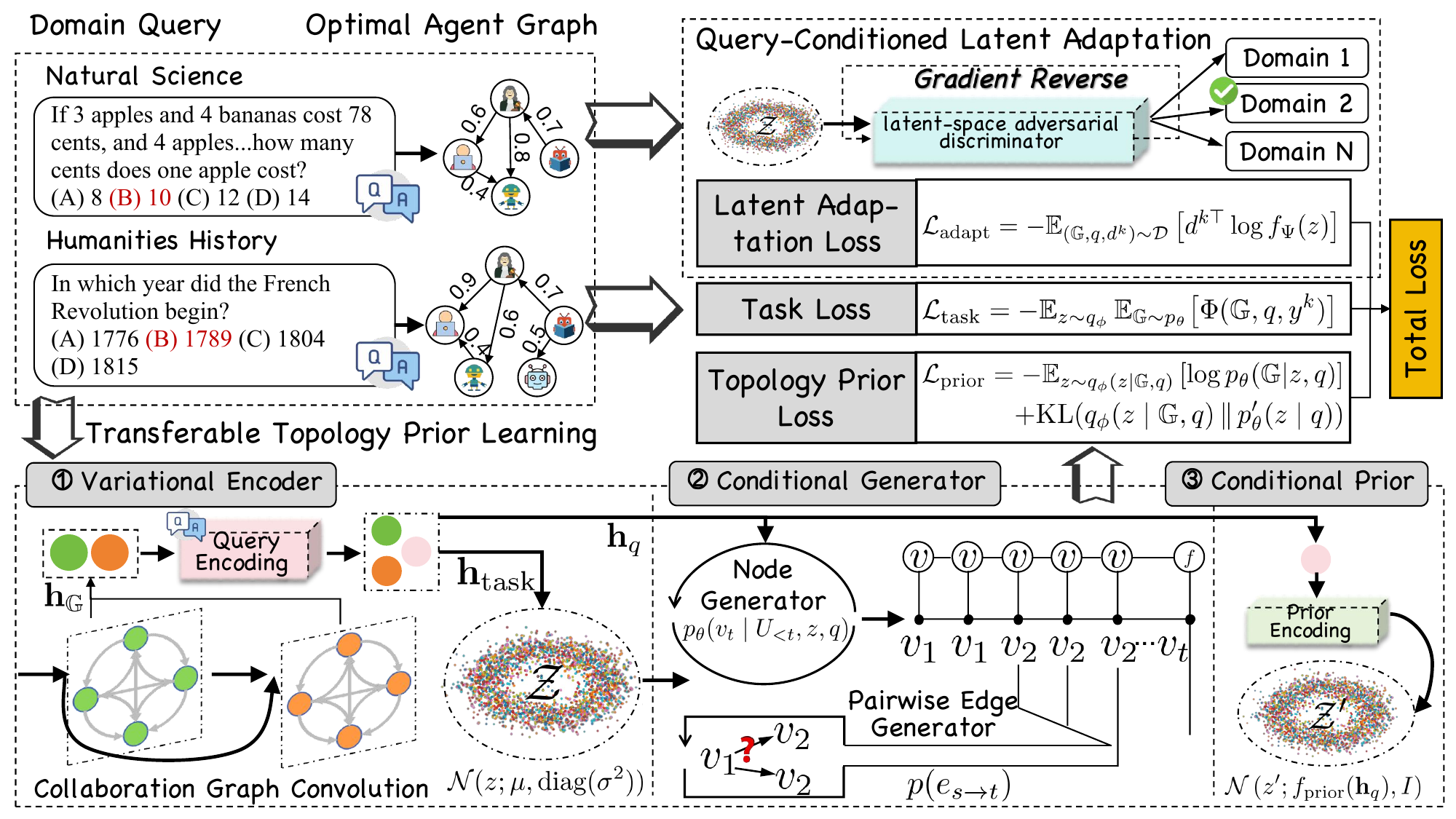}
\caption{Overview of \texttt{TopoPrior}. (1) \emph{Transferable Topology Prior Learning} captures reusable collaboration patterns from multiple domains through a conditional variational graph framework. (2) \emph{Query-Conditioned Latent Adaptation} improves cross-domain robustness by adversarially regularizing the latent space while retaining query-relevant structural information.}
\label{model_fig}
\vspace{-1.4em}
\end{figure*}

\subsection{Transferable Topology Prior Learning}
The first component of \texttt{TopoPrior} aims to learn reusable topology priors that can initialize collaboration graphs across domains. Since queries from different domains may induce distinct yet structurally related collaboration patterns, we adopt a conditional variational graph framework~\cite{DBLP:journals/corr/KipfW16a} to encode reference graphs and queries into a shared latent space and reconstruct query-conditioned initial topologies from the resulting latent prior.

Formally, given a query--graph pair $(q,\mathbb{G})$, we model the conditional likelihood of the collaboration graph as
\begin{multline}
\log p_\theta(\mathbb{G}\mid q)
\geq
\mathbb{E}_{z\sim q_\phi(z\mid \mathbb{G},q)}
\big[\log p_\theta(\mathbb{G}\mid z,q)\big] \\
-
\mathrm{KL}\!\left(q_\phi(z\mid \mathbb{G},q)\,\|\,p'_\theta(z\mid q)\right),
\end{multline}
where $q_\phi(z\mid \mathbb{G},q)$ is the variational encoder, $p_\theta(\mathbb{G}\mid z,q)$ is the conditional graph generator, and $p'_\theta(z\mid q)$ is a query-conditioned prior. This formulation allows \texttt{TopoPrior} to capture reusable structural regularities from reference collaboration graphs while preserving query-specific adaptation through the conditional prior.

\noindent\textbf{Variational Encoder.}
Given a query and its reference graph $(q,\mathbb{G})$, the encoder $q_\phi(z\mid \mathbb{G},q)$ uses a Graph Convolutional Network (GCN)~\cite{DBLP:conf/iclr/KipfW17} to encode the collaboration graph \footnote{All other alternative graph modeling methods can serve as replacements for this practice.}:
\begin{equation}
\mathbf{h}_{v}^{(l+1)}
=
f_\sigma\!\left(
\tilde{\mathcal{D}}^{-\frac{1}{2}}
\tilde{\mathcal{A}}_i^{k}
\tilde{\mathcal{D}}^{-\frac{1}{2}}
\mathbf{h}_{v}^{(l)}
\mathbf{W}_{\mathrm{ve}}^{(l)}
\right),
\end{equation}
where $\tilde{\mathcal{A}}_i^{k}=\mathcal{A}_i^{k}+I$ is the adjacency matrix with self-loops, $\tilde{\mathcal{D}}$ is the corresponding degree matrix, and $\mathbf{W}_{\mathrm{ve}}^{(l)}$ are trainable weights. The query representation $\mathbf{h}_q$ and the initial agent-role representation $\mathbf{h}_{v}^{(0)}$ are obtained from a frozen text encoder such as BERT \cite{DBLP:conf/naacl/DevlinCLT19}. We then compute a graph-level representation by sum pooling over node embeddings:
\begin{equation}
\mathbf{h}_{\mathbb{G}}=\sum_{n=1}^{N}\mathbf{h}_{v_n}.
\end{equation}
The query and graph representations are fused as
\begin{equation}
\mathbf{h}_{\mathrm{task}}=\mathrm{MLP}_{\mathrm{task}}(\mathbf{h}_q \parallel \mathbf{h}_{\mathbb{G}}).
\end{equation}
The latent variable $z$ is sampled from the Gaussian posterior:
\begin{gather}
\boldsymbol{\mu}=\mathbf{W}_{\mu}\mathbf{h}_{\mathrm{task}}+\mathbf{b}_{\mu},\\
\log \boldsymbol{\sigma}^{2}=\mathbf{W}_{\sigma}\mathbf{h}_{\mathrm{task}}+\mathbf{b}_{\sigma},\\
q_\phi(z\mid \mathbb{G},q)=\mathcal{N}\!\left(z;\boldsymbol{\mu},\mathrm{diag}(\boldsymbol{\sigma}^{2})\right),
\end{gather}
where $z=\boldsymbol{\mu}+\boldsymbol{\sigma}\odot \varepsilon$, $\varepsilon\sim\mathcal{N}(0,I)$, and $\boldsymbol{\sigma}=\exp\!\left(\frac{1}{2}\log \boldsymbol{\sigma}^{2}\right)$.

\noindent\textbf{Conditional Generator.}
The conditional generator reconstructs an initial collaboration graph autoregressively:
\begin{multline}
p_\theta(\mathbb{G}\mid z,q)
=
\prod_{t=1}^{T} p_\theta(v_t\mid U_{<t},z,q) \\
\times
\prod_{t=2}^{T}\prod_{s=1}^{t-1}
p_\theta(e_{s\rightarrow t}\mid v_s,v_t,U_{<t},z,q),
\end{multline}
where $U_{<t}$ denotes the set of previously generated nodes, and $T$ is the total number of generation steps.
$p_\theta(e_{s\rightarrow t})$ represents the generation probability of edges between nodes. If it is greater than the threshold $\delta_e$ during inference, a newly generated edge will be added to $\mathcal{E}$.
In our implementation, node generation selects agent roles from a predefined role pool, and edge generation predicts directed communication links between generated nodes conditioned on the latent prior and the query representation. In practice, $T$ is bounded by the candidate role pool, and unselected roles are omitted from the generated graph. This autoregressive design enables \texttt{TopoPrior} to produce sparse, query-adaptive initial collaboration graphs for downstream refinement.

\noindent\textbf{Conditional Prior.}
The conditional prior $p'_\theta(z\mid q)$ provides a query-specific latent prior for topology generation:
\begin{equation}
p'_\theta(z\mid q)=\mathcal{N}\!\left(z;f_{\mathrm{prior}}(\mathbf{h}_q),I\right),
\end{equation}
where $f_{\mathrm{prior}}$ is a trainable MLP. Since reference graphs are unavailable at inference time, this conditional prior plays an important role in mapping unseen queries to useful regions of the topology latent space. The resulting topology-prior learning objective is
\begin{multline}
\mathcal{L}_{\mathrm{prior}}
=
-
\mathbb{E}_{z\sim q_\phi(z\mid \mathbb{G},q)}
\big[\log p_\theta(\mathbb{G}\mid z,q)\big] \\
+
\mathrm{KL}\!\left(q_\phi(z\mid \mathbb{G},q)\,\|\,p'_\theta(z\mid q)\right).
\end{multline}

\subsection{Query-Conditioned Latent Adaptation}
While transferable topology priors can improve initialization efficiency, effective collaboration still requires sensitivity to query-relevant structural cues that may vary across domains. To this end, we introduce a domain-adversarial discriminator on the latent variable $z$ to reduce unnecessary domain discrepancy during training. The discriminator is defined as
\begin{gather}
f_\Psi(z)=f_d(\mathbf{W}_d z+\mathbf{b}_d),\\
\mathcal{L}_{\mathrm{adapt}}
=
-
\mathbb{E}_{(\mathbb{G},q,y^k)\sim\mathcal{D}}
\left[y^{k\top}\log f_\Psi(z)\right],
\end{gather}
where $f_d$ is the softmax function and a gradient reversal layer (GRL)~\cite{DBLP:conf/icml/GaninL15} is applied during training. Combined with query-conditioned prior learning and graph reconstruction, this adversarial objective reduces cross-domain discrepancy in the latent space, while the conditional prior and reconstruction objective help preserve query-dependent and task-relevant structural information for topology generation.

\begin{algorithm}[!t]
\newcommand{\comm}[1]{\textcolor{gray!80}{\textit{#1}}}
\caption{\texttt{TopoPrior} Training Process}
\label{training_algo}
\footnotesize
\begin{algorithmic}[1]
\REQUIRE Multi-domain training set $\mathcal{D} = \bigcup_{k=1}^{K} \mathcal{D}^{k}$, hyperparameter $\alpha$ and $\beta$, learning rate $\eta$, batch size $B$, number of epochs $E$
\ENSURE Trained \texttt{TopoPrior} parameters $\theta$, $\phi$, $\Psi$

\STATE Initialize variational encoder $q_\phi(z|\mathbb{G}, q)$ with GCN layers
\STATE Initialize conditional generator $p_\theta(\mathbb{G}|z, q)$, prior network $p'_\theta(z|q)$, and domain discriminator $f_\Psi(z)$

\FOR{epoch $= 1$ to $E$}
    \FOR{batch $(q_i, \mathbb{G}_i, d_i) \sim \mathcal{D}$}
        \STATE \comm{Step 1: Encode graph and query}
        \STATE Compute node representations $\mathbf{h}_v$ via GCN layers
        \STATE Aggregate graph representation $\mathbf{h}_{\mathbb{G}} = \sum_{v} \mathbf{h}_v$
        \STATE Encode query $q_i$ into $\mathbf{h}_q$
        \STATE Fuse query and graph features: $\mathbf{h}_{\text{task}} = \mathrm{MLP}_{\text{task}}([\mathbf{h}_q \parallel \mathbf{h}_{\mathbb{G}}])$
        \STATE Sample latent variable $z \sim q_\phi(z|\mathbb{G}, q)$ via reparameterization

        \STATE \comm{Step 2: Reconstruct graph}
        \STATE Generate nodes and edges autoregressively via $p_\theta(\mathbb{G}|z, q)$

        \STATE \comm{Step 3: Compute losses}
        \STATE $\mathcal{L}_{\text{recon}} = -\mathbb{E}_{z \sim q_\phi}[\log p_\theta(\mathbb{G}|z, q)]$
        \STATE $\mathcal{L}_{\text{KL}} = \mathrm{KL}(q_\phi(z|\mathbb{G}, q) \| p'_\theta(z|q))$
        \STATE $\mathcal{L}_{\text{adapt}} = -\mathbb{E}[y \cdot \log f_\Psi(z)]$

        \STATE \comm{Step 4: Update parameters}
        \STATE $\mathcal{L}_{\text{prior}} = \mathcal{L}_{\text{recon}} + \mathcal{L}_{\text{KL}}$
        \STATE $\mathcal{L}_{\text{total}} = \mathcal{L}_{\text{prior}} + \alpha \mathcal{L}_{\mathrm{task}} + \beta \mathcal{L}_{\text{adapt}}$
        \STATE Update $\theta, \phi, \Psi$ via $\nabla_{\theta, \phi, \Psi} \mathcal{L}_{\text{total}}$
    \ENDFOR
\ENDFOR
\end{algorithmic}
\end{algorithm}

\subsection{Training and Inference}
The training objective of \texttt{TopoPrior} combines topology-prior learning and latent-space alignment:
\begin{equation}
\mathcal{L}_{\mathrm{total}}
=
\mathcal{L}_{\mathrm{prior}}
+
\alpha\,\mathcal{L}_{\mathrm{task}}
+
\beta\,\mathcal{L}_{\mathrm{adapt}},
\end{equation}
where $\alpha,\beta\geq 0$ are hyperparameters, and $\mathcal{L}_{\mathrm{task}}$ denotes the loss of the underlying task.

During training, \texttt{TopoPrior} learns transferable topology priors from reference graphs constructed on the training domains. During inference, given a new query $q$, we first sample $z\sim p'_\theta(z\mid q)$ and then generate an initial collaboration graph $\hat{\mathbb{G}}\sim p_\theta(\mathbb{G}\mid z,q)$. The generated graph is subsequently passed to a downstream topology-evolution backbone, which refines it according to its original task-specific optimization process. In this way, \texttt{TopoPrior} complements existing topology-evolution methods by providing lightweight and transferable graph initialization rather than replacing their task-specific search mechanisms. The training procedure for \texttt{TopoPrior} is summarized in Algorithm~\ref{training_algo}.

\subsection{Analytical Perspective}
\label{sec:theory}
We next provide an analytical perspective on the two design principles underlying \texttt{TopoPrior}. The goal of this subsection is not to establish end-to-end guarantees for LLM-based multi-agent topology evolution, but rather to formalize the intuition behind latent alignment and topology initialization using standard analytical tools. Since both arguments are adapted from classical generalization and convergence analyses, we present concise proof sketches specialized to our setting. Detailed proofs are deferred to Appendix~\ref{proof_theory}.

\subsubsection{Cross-Domain Transfer via Latent Alignment}
\label{subsec:multidomain}
Our framework uses adversarial latent alignment to reduce avoidable domain discrepancy across source domains, which may improve transfer to unseen domains. Let $P_1,\dots,P_K$ denote the marginal latent distributions of the $K$ source domains induced by the encoder, and let $P_t$ denote the marginal latent distribution of a target domain. For any hypothesis $h$ in the class $\mathcal{H}$, define the expected error on domain $d$ as
\begin{equation}
\epsilon_d(h)=\mathbb{E}_{z\sim P_d}[\ell_h(z)],
\end{equation}
where $\ell_h$ is a bounded loss function.

We measure domain discrepancy in the latent space by the $\mathcal{H}\Delta\mathcal{H}$-divergence:
\begin{multline}
d_{\mathcal{H}\Delta\mathcal{H}}(P,Q)
=
2\sup_{h,h'\in\mathcal{H}}
\big|\Pr_{z\sim P}[h(z)\neq h'(z)] \\
-\Pr_{z\sim Q}[h(z)\neq h'(z)]\big|.
\end{multline}

By adapting standard multi-source domain adaptation analysis~\cite{DBLP:conf/icml/0002CZG19,DBLP:conf/uai/SiciliaAAH22} to the encoder-induced latent space, we obtain the following bound.

\begin{theorem}[Multi-Source Domain Adaptation Bound in Latent Space]
\label{thm:multi_domain}
Let $\alpha\in\Delta^{K-1}$ be convex weights over source domains, and let $P_\alpha=\sum_{k=1}^{K}\alpha_k P_k$. Then, for any $h\in\mathcal{H}$,
\begin{equation}
\epsilon_t(h)
\leq
\sum_{k=1}^{K}\alpha_k\epsilon_k(h)
+
\frac{1}{2}d_{\mathcal{H}\Delta\mathcal{H}}(P_\alpha,P_t)
+
\lambda^*_{\alpha,t},
\end{equation}
where
\begin{equation}
\lambda^*_{\alpha,t}
=
\min_{h\in\mathcal{H}}
\left(
\sum_{k=1}^{K}\alpha_k\epsilon_k(h)+\epsilon_t(h)
\right).
\end{equation}
\end{theorem}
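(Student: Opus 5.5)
The plan is to follow the classical single-source argument of Ben-David et al., with the mixture $P_\alpha$ playing the role of the source domain. We work in the standard setting where $\ell_h(z)=\ell(h(z),f_d(z))$ is the $0$--$1$ disagreement loss (or any bounded loss obeying the triangle inequality) against a labeling function $f_d$. To make the mixture a well-defined single source, define the mixed source error $\epsilon_\alpha(h)=\sum_k\alpha_k\epsilon_k(h)$. This equals the expectation of the $k$-weighted loss under the joint mixture, so the mixture can be treated as one source. For two hypotheses $h,h'$ and a domain $d$, write the disagreement as $\epsilon_d(h,h')=\Pr_{z\sim P_d}[h(z)\neq h'(z)]$. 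For the mixture, set $\epsilon_\alpha(h,h')=\sum_k\alpha_k\epsilon_k(h,h')=\Pr_{z\sim P_\alpha}[h(z)\neq h'(z)]$; this identity follows by linearity of expectation over the mixture.

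The first key step is the triangle inequality. Let $h^*$ attain the minimum defining $\lambda^*_{\alpha,t}$. Then
\begin{equation*}
\epsilon_t(h)\le \epsilon_t(h^*)+\epsilon_t(h,h^*).
\end{equation*}
Next, add and subtract the source disagreement:
\begin{equation*}
\epsilon_t(h,h^*)\le \epsilon_\alpha(h,h^*)+\bigl|\epsilon_t(h,h^*)-\epsilon_\alpha(h,h^*)\bigr|.
\end{equation*}
Because $h,h^*\in\mathcal{H}$, the absolute difference is at most $\sup_{h,h'\in\mathcal{H}}|\Pr_{P_\alpha}[h\neq h']-\Pr_{P_t}[h\neq h']|=\tfrac12 d_{\mathcal{H}\Delta\mathcal{H}}(P_\alpha,P_t)$, which matches the factor $2$ in the paper's definition.

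Finally, apply the triangle inequality again on the source side. On each source, $\epsilon_k(h,h^*)\le\epsilon_k(h)+\epsilon_k(h^*)$, and averaging with weights $\alpha_k$ gives $\epsilon_\alpha(h,h^*)\le\epsilon_\alpha(h)+\epsilon_\alpha(h^*)$. Combining the three inequalities yields
\begin{equation*}
\epsilon_t(h)\le \epsilon_\alpha(h)+\tfrac12 d_{\mathcal{H}\Delta\mathcal{H}}(P_\alpha,P_t)+\epsilon_\alpha(h^*)+\epsilon_t(h^*).
\end{equation*}
The last two terms sum to $\lambda^*_{\alpha,t}$ by the choice of $h^*$, which completes the bound. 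If the minimum is not attained, take $h^*$ with value within $\eta$ of the infimum and let $\eta\to0$.

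The main obstacle is a modeling issue rather than a calculation. The triangle-inequality steps require the loss to be a (pseudo)metric disagreement between $h$ and a domain labeling function. The paper's abstract $\ell_h$ with only boundedness does not give this for free. We would therefore state explicitly that $\ell_h(z)=\mathbf{1}[h(z)\neq f_d(z)]$, or more generally that $\ell$ is symmetric, satisfies the triangle inequality, and is bounded by $1$. In the general case, the divergence term must be replaced by the corresponding discrepancy $\sup_{h,h'}|\mathbb{E}_{P_\alpha}\ell(h,h')-\mathbb{E}_{P_t}\ell(h,h')|$, which coincides with $\tfrac12 d_{\mathcal{H}\Delta\mathcal{H}}$ in the $0$--$1$ case. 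A second subtlety arises if the source labeling functions differ across domains. Then the mixture disagreement identity $\epsilon_\alpha(h,h')=\Pr_{P_\alpha}[h\neq h']$ still holds, because it does not involve labels. The mixed error $\epsilon_\alpha(h)$ is simply defined as the weighted sum, so the argument goes through unchanged.
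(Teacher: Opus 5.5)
Your proof is correct, and it takes a different route from the paper's. You chain three triangle inequalities through the disagreement $\Pr_{z\sim P_d}[h(z)\neq h^*(z)]$; this is the classical Ben-David et al.\ argument with $P_\alpha$ as the source. The paper instead writes $\epsilon_t(h)$ as $\epsilon_\alpha(h)$ plus an add-and-subtract expansion. It then bounds the single term $\bigl|(\epsilon_t(h)-\epsilon_t(h^*))-(\epsilon_\alpha(h)-\epsilon_\alpha(h^*))\bigr|$ by $\tfrac12 d_{\mathcal{H}\Delta\mathcal{H}}(P_\alpha,P_t)$, citing ``the standard bound.''

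The paper's route is shorter, but that step is not valid in general. A difference of errors is not a disagreement probability between two members of $\mathcal{H}$, so the divergence does not control it. (Its displayed expansion also actually sums to $\epsilon_t(h)+2\epsilon_\alpha(h^*)$; this is harmless only because the excess is nonnegative.) Concretely, take $Z=\{a,b\}$ and $\mathcal{H}=\{h_1\equiv 0,\ h_2\equiv 1\}$. Since $h_1$ and $h_2$ disagree everywhere, $d_{\mathcal{H}\Delta\mathcal{H}}(P,Q)=0$ for all $P,Q$. Use a shared labeling $f(a)=0$, $f(b)=1$ with $0$--$1$ loss, and set $P_\alpha=\delta_a$, $P_t=\delta_b$. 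Both hypotheses attain $\lambda^*_{\alpha,t}=1$, and for $h^*=h_2$, $h=h_1$ the paper's absolute term equals $2$. Your chain only ever compares $\Pr_{P_t}[h\neq h^*]$ with $\Pr_{P_\alpha}[h\neq h^*]$ for $h,h^*\in\mathcal{H}$, which is exactly what the divergence is defined to bound. On this example it gives the tight inequality $1\le 0+0+1$.

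Your ``modeling issue'' paragraph is also substantive rather than pedantic. With $\ell_h$ merely bounded, as the paper states, the theorem is false. On the same $Z$, $\mathcal{H}$, $P_\alpha$, $P_t$, set $\ell_{h_1}(a)=0$, $\ell_{h_1}(b)=1$, $\ell_{h_2}\equiv 0$. Then $\lambda^*_{\alpha,t}=0$ and the divergence vanishes, so the right-hand side for $h_1$ is $\epsilon_\alpha(h_1)+0+0=0$, while $\epsilon_t(h_1)=1$. So the $0$--$1$ structure you impose (or a triangle-inequality loss with the matching discrepancy) is a hypothesis the statement genuinely needs. Your remark that distinct source labeling functions $f_k$ cause no trouble is also right. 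The source-side triangle inequality is applied within each domain before averaging, and the divergence step involves no labels.
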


The result follows from applying standard multi-source domain adaptation arguments to the encoder-induced latent distributions and replacing input-space discrepancy with the corresponding $\mathcal{H}\Delta\mathcal{H}$-divergence in latent space~\cite{DBLP:conf/icml/0002CZG19,DBLP:conf/uai/SiciliaAAH22}. Theorem~\ref{thm:multi_domain} therefore provides a principled perspective on why reducing latent-space domain discrepancy may be beneficial for cross-domain transfer in \texttt{TopoPrior}. We emphasize that this result should be interpreted as motivation for latent-space regularization rather than as a task-specific guarantee for the full downstream system.

\subsubsection{Topology Initialization as Search Acceleration}
\label{subsec:acceleration}
Our topology initializer is designed to provide stronger starting collaboration graphs, thereby reducing the number of refinement rounds required by downstream topology evolution. Let
\begin{equation}
J_\lambda(\mathbb{G};q,y)=\mathrm{Perf}(\mathbb{G};q,y)-\lambda C(\mathbb{G};q),
\end{equation}
denote a utility function that combines task performance and communication cost, where $\lambda\geq 0$ controls the trade-off between the two terms. Define
\begin{equation}
U_t=\mathbb{E}[J_\lambda(\mathbb{G}_t;q,y)],
\end{equation}
as the expected utility at evolution step $t$, starting from an initial graph $\mathbb{G}_0$.

To analyze the role of initialization, we adopt an abstract contraction-style assumption commonly used in convergence analysis of iterative optimization procedures~\cite{DBLP:books/sp/Nesterov04}. Although downstream topology evolution in our setting is discrete and LLM-mediated, this assumption is used only as a simplified analytical model for studying the effect of initialization quality.

\begin{assumption}[Linear Convergence]
\label{asm:contraction}
There exists $\eta\in(0,1]$ such that, for all $t$,
\begin{equation}
U^*-U_{t+1}\leq(1-\eta)(U^*-U_t),
\end{equation}
where $U^*$ is the optimal achievable utility.
\end{assumption}

Under Assumption~\ref{asm:contraction}, we obtain the following round-complexity bound.

\begin{theorem}[Rounds to $\epsilon$-Suboptimality]
\label{thm:rounds}
Under Assumption~\ref{asm:contraction}, to ensure $U^*-U_T\leq\epsilon$,
it suffices that
\begin{equation}
T\geq
\frac{\log((U^*-U_0)/\epsilon)}{\log(1/(1-\eta))},
\end{equation}
\end{theorem}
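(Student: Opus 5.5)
The plan is to unroll the one-step contraction of Assumption~\ref{asm:contraction} by induction, and then solve the resulting geometric bound for $T$. Write $\Delta_t = U^* - U_t$ for the suboptimality gap at step $t$. First I will check that $\Delta_t\ge 0$ for all $t$, which holds because $U^*$ is defined as the optimal achievable utility and so dominates every $U_t$. Applying the assumption at steps $0,1,\dots,T-1$ and chaining the inequalities then gives
\begin{equation*}
\Delta_T \le (1-\eta)\Delta_{T-1} \le \cdots \le (1-\eta)^T \Delta_0 .
\end{equation*}
Each link of the chain only multiplies a valid inequality by the nonnegative factor $1-\eta$, so a formal induction on $t$ is routine.

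It therefore suffices to make $(1-\eta)^T\Delta_0\le\epsilon$. I will treat the degenerate cases first.
\begin{itemize}
\item If $\Delta_0\le\epsilon$, the logarithm in the bound is nonpositive, so every $T\ge 0$ is admissible. The conclusion then follows from $\Delta_T\le\Delta_0$.
\item If $\eta=1$, the denominator $\log(1/(1-\eta))$ is infinite. The right-hand side is read as $0$, and indeed $\Delta_1=0$, so $\Delta_T=0$ for every $T\ge 1$. The case $T=0$ is covered by the previous item or is vacuous under the convention chosen.
\end{itemize}

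In the main case $\eta\in(0,1)$ and $\Delta_0>\epsilon>0$, take logarithms in $(1-\eta)^T\Delta_0\le\epsilon$. This is equivalent to
\begin{equation*}
T\log\frac{1}{1-\eta} \ge \log\frac{\Delta_0}{\epsilon}.
\end{equation*}
Since $\log(1/(1-\eta))>0$, dividing by it preserves the direction of the inequality and yields exactly the stated threshold.

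There is no genuine obstacle here. The only points needing care are the sign conventions: nonnegativity of the gaps, and positivity of $\log(1/(1-\eta))$ when dividing. I expect the boundary cases $\eta=1$ and $U_0\ge U^*-\epsilon$ to be the places where the statement must be interpreted rather than derived.

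Finally, I will note the interpretive consequence for \texttt{TopoPrior}. The bound depends on the initialization only through $\log(U^*-U_0)$. A query-conditioned initial graph with higher expected utility $U_0$ therefore lowers the sufficient number of refinement rounds additively, by $\log\bigl((U^*-U_0^{\mathrm{scratch}})/(U^*-U_0^{\mathrm{prior}})\bigr)/\log(1/(1-\eta))$.
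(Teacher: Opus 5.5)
Your proposal is correct and follows the paper's argument: unroll the contraction assumption to get $U^*-U_T\le(1-\eta)^T(U^*-U_0)$, then take logarithms and divide by $\log(1/(1-\eta))>0$. Your boundary-case care goes beyond the paper, which simply asserts $0<1-\eta<1$ even though the assumption allows $\eta=1$; however, when $\eta=1$ and $U^*-U_0>\epsilon$, reading the threshold as $0$ admits $T=0$, where the conclusion fails, so that case is not vacuous and is best handled by restricting to $\eta<1$ or requiring $T\ge 1$ there.
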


The bound is obtained by recursively unrolling Assumption~\ref{asm:contraction}, which gives $U^*-U_T\leq(1-\eta)^T(U^*-U_0)$, and then solving for $T$.

\begin{corollary}[Better Initialization Reduces Rounds]
\label{cor:rounds_reduction}
If the prior-based initialization satisfies $U_0^{\mathrm{prior}}>U_0^{\mathrm{scratch}}$, then for any $\epsilon$,
\begin{equation}
T_{\mathrm{prior}}(\epsilon)-T_{\mathrm{scratch}}(\epsilon)
\leq
\frac{
\log\!\left(
\frac{U^*-U_0^{\mathrm{prior}}}{U^*-U_0^{\mathrm{scratch}}}
\right)
}{
\log(1/(1-\eta))
}
<0.
\end{equation}
\end{corollary}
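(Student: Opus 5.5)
The plan is to derive Corollary~\ref{cor:rounds_reduction} from Theorem~\ref{thm:rounds}, with $T(\epsilon)$ taken as the smallest sufficient number of rounds given by that bound.

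\textbf{Define the round counts.} For each initialization $\mathrm{init}\in\{\mathrm{prior},\mathrm{scratch}\}$, set
\begin{equation*}
T_{\mathrm{init}}(\epsilon)=\frac{\log\big((U^*-U_0^{\mathrm{init}})/\epsilon\big)}{\log(1/(1-\eta))}.
\end{equation*}
If integer rounds are required, use the ceiling, and I comment on this below. Both initializations are assumed to share the same evolution dynamics, so Assumption~\ref{asm:contraction} holds with the same $\eta$ and the same $U^*$.

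\textbf{Subtract.} The denominators are identical and the $\log\epsilon$ terms cancel, so
\begin{equation*}
T_{\mathrm{prior}}-T_{\mathrm{scratch}}=\frac{\log\big((U^*-U_0^{\mathrm{prior}})/(U^*-U_0^{\mathrm{scratch}})\big)}{\log(1/(1-\eta))}.
\end{equation*}
This gives the claimed expression, with equality in the unrounded version.

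\textbf{Strict negativity.} The hypothesis $U_0^{\mathrm{prior}}>U_0^{\mathrm{scratch}}$ implies
\begin{equation*}
0\le U^*-U_0^{\mathrm{prior}}<U^*-U_0^{\mathrm{scratch}}.
\end{equation*}
Hence the ratio lies in $[0,1)$ and its logarithm is negative. If $U_0^{\mathrm{prior}}=U^*$, the ratio is $0$ and $T_{\mathrm{prior}}=0$ suffices trivially. In the other direction, $\eta\in(0,1)$ gives $\log(1/(1-\eta))>0$, so the quotient is negative. The edge case $\eta=1$ makes the denominator infinite. In that case a single round already reaches $U^*$, so both counts are at most one; I would exclude this case or treat it separately.

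\textbf{Main obstacle: integer rounding.} The subtlety is the meaning of $T(\epsilon)$. If it must be an integer, then $\lceil a\rceil-\lceil b\rceil\le a-b$ can fail; for example, $a=1.1$, $b=1.0$ gives $2-1=1>0.1$. I would therefore state the corollary for the real-valued sufficient bound from Theorem~\ref{thm:rounds}, which is the natural reading.

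Alternatively, using integer counts, one can still show $\lceil a\rceil\le\lceil b\rceil$ whenever $a<b$. This gives $T_{\mathrm{prior}}\le T_{\mathrm{scratch}}$ without the exact gap. I would also note that the bound compares sufficient conditions, not the exact numbers of rounds actually needed.
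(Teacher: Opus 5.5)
Your proposal is correct and follows the paper's proof. The paper also takes $T(\epsilon)$ to be the real-valued bound from Theorem~\ref{thm:rounds} for each initialization, subtracts so the $\log\epsilon$ terms cancel (giving equality), and gets strict negativity from a negative numerator over a positive denominator. The paper's proof passes over your caveats about integer rounding and the $\eta=1$ edge case; its proof of Theorem~\ref{thm:rounds} simply assumes $0<1-\eta<1$. Stating the result for the unrounded bound, as you do, is therefore the right reading.
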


The corollary follows directly by comparing the bounds induced by $U_0^{\mathrm{prior}}$ and $U_0^{\mathrm{scratch}}$ in Theorem~\ref{thm:rounds}. It formalizes the intuition that better initialization can reduce the search cost of downstream topology evolution under the stated assumption. This interpretation is consistent with the lower token usage and fewer communication rounds observed in our experiments, although the result should be viewed as an analytical perspective rather than a literal guarantee for multi-agent LLM systems.

\begin{table}[!t]
\centering
\caption{Test-set statistics under the domain partitions used in our experiments. For training, we follow the official train/validation splits provided by MMLU and C-Eval.}
\scriptsize
\setlength{\tabcolsep}{4pt}
\renewcommand{\arraystretch}{1.0}
\begin{tabular}{p{0.32\columnwidth}c p{0.32\columnwidth}c}
\toprule
\textbf{Domain} & \textbf{Test Size} & \textbf{Domain} & \textbf{Test Size} \\
\midrule
\multicolumn{4}{c}{\textbf{MMLU}} \\
\midrule
Natural Sciences & 6696 & Ethics and Morality & 558 \\
Engineering and Technology & 1116 & Business and Management & 1453 \\
Social Sciences & 2072 & Humanities and History & 1674 \\
\makecell[l]{Law, Government, \\ and Public Affairs} & 1825 & & \\
\midrule
\multicolumn{4}{c}{\textbf{C-Eval}} \\
\midrule
Natural Sciences & 2312 & \makecell[l]{Vocational Qualifications \\ and Professional Examinations} & 1581 \\
Engineering and Technology & 2424 & Medicine and Life Sciences & 1227 \\
Social Sciences and Humanities & 4618 & & \\
\midrule
\multicolumn{4}{c}{\textbf{C-Eval Hard}} \\
\midrule
Mathematics & 835 & Chemistry & 581 \\
Physics & 529 & & \\
\bottomrule
\end{tabular}
\label{dataset_size}
\vspace{-1em}
\end{table}

\section{Experiments}
\label{all_exp}
In this section, we evaluate \texttt{TopoPrior} on multi-domain reasoning benchmarks and compare it with representative baselines, with a particular focus on topology-evolution methods. We report both downstream task performance and efficiency-related metrics, including online inference-time token usage and communication rounds. We further conduct ablation, sensitivity, and generalization analyses to examine the effectiveness of transferable topology prior learning under different settings.

\subsection{Datasets}
\label{app_datasets}
We evaluate \texttt{TopoPrior} on two widely used LLM benchmarks, MMLU~\cite{DBLP:conf/iclr/HendrycksBBZMSS21} and C-Eval~\cite{DBLP:conf/nips/HuangBZZZSLLZLF23}, under multi-domain settings. To construct domains for topology-prior learning, we group fine-grained benchmark subcategories into a smaller number of major disciplinary categories according to semantic relatedness and the original benchmark taxonomy. The resulting test-set statistics are summarized in Table~\ref{dataset_size}.

\noindent\textbf{MMLU.}
MMLU contains 57 subject-level tasks. We organize them into seven major domains according to their disciplinary themes:
\textit{(1) Natural Sciences,}
\textit{(2) Engineering and Technology,}
\textit{(3) Social Sciences,}
\textit{(4) Humanities and History,}
\textit{(5) Law, Government, and Public Affairs,}
\textit{(6) Ethics and Morality,} and
\textit{(7) Business and Management.}
Together, these domains cover quantitative reasoning, technical problem solving, legal and policy analysis, ethical judgment, and knowledge-intensive social and historical understanding.

\noindent\textbf{C-Eval.}
C-Eval contains 52 subject-level tasks designed to assess the general understanding capability of Chinese LLMs. We group these tasks into five major domains:
\textit{(1) Natural Sciences,}
\textit{(2) Engineering and Technology,}
\textit{(3) Social Sciences and Humanities,}
\textit{(4) Vocational Qualifications and Professional Examinations,} and
\textit{(5) Medicine and Life Sciences.}
These categories cover both academic disciplines and professional qualification scenarios, providing a diverse testbed for multi-domain collaboration.

\noindent\textbf{C-Eval Hard.}
To further evaluate robustness on challenging scientific reasoning tasks, we construct \emph{C-Eval Hard} by extracting three natural-science subdomains from C-Eval:
\textit{(1) Mathematics,}
\textit{(2) Chemistry,} and
\textit{(3) Physics.}
This benchmark focuses on difficult quantitative reasoning and serves as an additional evaluation setting for cross-domain transfer under domain variation.

\subsection{Baselines}
\label{baselines_app}
\noindent\textbf{Training-free methods.}
We consider vanilla prompt engineering (PE) with Llama3-8B-Instruct~\cite{touvron2023}, Qwen2.5-72B-Instruct~\cite{DBLP:journals/corr/abs-2412-15115}, and DeepSeek-V3-671B-Instruct~\cite{deepseekai2025} as representative single-agent backbones. We further include Chain-of-Thought (CoT)~\cite{DBLP:conf/nips/Wei0SBIXCLZ22} and Retrieval-Augmented Generation (RAG)~\cite{DBLP:conf/nips/LewisPPPKGKLYR020} as stronger prompting-based baselines.

\noindent\textbf{Training-intensive methods.}
We compare with representative multi-domain adaptation methods, including MoDULA~\cite{DBLP:conf/emnlp/0011LDCGRZJJZCY24}, MoDE~\cite{DBLP:journals/corr/abs-2410-10181}, and DES-MoE~\cite{DBLP:journals/corr/abs-2509-16882}. These methods improve domain specialization through parameter-efficient fine-tuning or mixture-of-experts architectures.

\noindent\textbf{Training-light multi-agent methods.}
We evaluate \texttt{TopoPrior} on top of several topology-evolution backbones, including G-Designer~\cite{DBLP:conf/icml/ZhangYSWYF00C25}, AgentPrune~\cite{DBLP:conf/iclr/ZhangYLYWWCY025}, ARG-Designer~\cite{DBLP:conf/aaai/LiLWZP26}, and AgentDropout~\cite{DBLP:conf/acl/WangW00Z0025}. These methods dynamically optimize collaboration topologies through graph learning, pruning, autoregressive generation, or agent/edge dropout, respectively. Our method is complementary to these approaches, as it provides transferable graph initialization while leaving their downstream topology-refinement procedures unchanged.

\begin{table*}[!t]
\centering
\caption{Agent roles used in \texttt{TopoPrior}, aligned with the domain partitions of MMLU and C-Eval. Roles are selected dynamically according to the query content and task context.}
\tiny
\setlength{\tabcolsep}{4pt}
\renewcommand{\arraystretch}{1.1}
\begin{tabular}{>{\centering\arraybackslash}p{0.15\textwidth}
                >{\centering\arraybackslash}p{0.16\textwidth}
                >{\arraybackslash}p{0.41\textwidth}
                >{\centering\arraybackslash}p{0.18\textwidth}}
\toprule
\textbf{Role Name} & \textbf{Domain} & \textbf{Role Description} & \textbf{Sub-domain} \\
\midrule
Natural Science Expert & Natural Sciences &
Provides knowledge in physics, chemistry, biology, and medicine, and handles formal reasoning with domain-specific terminology.
& Math, Physics, Chemistry, Biology \\

Engineering Specialist & Engineering \& Technology &
Solves queries in computer science, algorithms, security, and systems design using structured technical reasoning.
& Computer Science, ML, Security \\

Social Scientist & Social Sciences &
Analyzes economic, psychological, sociological, and political concepts, and performs causal and qualitative reasoning.
& Economics, Psychology, Political Science \\

Humanities Scholar & Humanities \& History &
Interprets historical events, philosophical arguments, cultural contexts, and ethical frameworks.
& History, Philosophy, World Religions \\

Legal Analyst & Law, Government, and Public Affairs &
Interprets legal texts, statutes, treaties, and policy documents, and performs normative and juridical reasoning.
& Jurisprudence, International Law \\

Ethics Consultant & Ethics \& Morality &
Evaluates moral dilemmas, ethical scenarios, and value-based judgments, especially in subjective contexts.
& Moral Controversies, Ethical Scenarios \\

Business Strategist & Business \& Management &
Analyzes business ethics, accounting, marketing, and management strategies by combining normative and financial reasoning.
& Business Ethics, Accounting, Marketing \\

Mathematical Expert & Mathematics &
Solves mathematical queries, including discrete mathematics, probability, statistics, and algebra.
& Discrete Math, Probability, Statistics \\

Chemistry Specialist & Chemistry &
Answers chemistry questions at the high-school and college levels, and explains chemical reactions, properties, and theories.
& General Chemistry, Organic Chemistry \\

Physics Specialist & Physics &
Handles physics questions at the high-school and college levels, and applies principles from mechanics, electromagnetism, and related areas.
& Mechanics, Electromagnetism \\

Medical Life Scientist & Medical &
Provides knowledge in clinical medicine, veterinary science, agronomy, and plant sciences.
& Clinical Medicine, Veterinary Science \\

Vocational Examiner & Vocational Qualifications &
Answers certification-oriented questions in finance, taxation, civil service, and tourism.
& CPA, Tax Agent, Civil Service \\

General Coordinator & Multi-domain &
Orchestrates collaboration, aggregates outputs, and manages communication flow among specialized agents.
& All domains (coordination role) \\
\bottomrule
\end{tabular}
\label{agent_roles}
\vspace{-1em}
\end{table*}

\begin{table*}[!t]
\centering
\caption{Results on MMLU across domains with different LLM backbones. ``GD'', ``AP'', ``ARG'', and ``AD'' denote G-Designer, AgentPrune, ARG-Designer, and AgentDropout, respectively. Best results are in bold and second-best results are underlined.}
\setlength{\tabcolsep}{2pt}
\begin{tabular}{ccccccccc}
\toprule
\multicolumn{1}{c}{\multirow{1}{*}{\textbf{Domain}$ \quad \rightarrow$}} & 
\multirow{2}{*}{\makecell{\textbf{Natural} \\ \textbf{Sciences}}} &
\multirow{2}{*}{\makecell{\textbf{Engineering} \\ \textbf{Technology}}} & 
\multirow{2}{*}{\makecell{\textbf{Social} \\ \textbf{Sciences}}} & 
\multirow{2}{*}{\makecell{\textbf{Humanities} \\ \textbf{History}}} & 
\multirow{2}{*}{\makecell{\textbf{Law, Government} \\ \textbf{Public Affairs}}} & 
\multirow{2}{*}{\makecell{\textbf{Ethics} \\ \textbf{Morality}}} & 
\multirow{2}{*}{\makecell{\textbf{Business} \\ \textbf{Management}}} & 
\multirow{2}{*}{\textbf{Average}} \\
\multicolumn{1}{c}{\multirow{1}{*}{\textbf{Models}$  \quad \downarrow$}}  & & & & & & & & \\
\midrule
\addlinespace[0.3pt]
\midrule
\multicolumn{9}{c}{Base model: Llama3-8B-Instruct} \\
\midrule
\addlinespace[0.3pt]
\midrule
PE & 57.84 & 58.89 & 56.61 & 60.05 & 49.84 & 45.72 & 59.40 & 55.05 \\
CoT & 58.97$_{\textcolor{orange}{(\uparrow 1.13)}}$ & 61.51$_{\textcolor{orange}{(\uparrow 2.62)}}$ & 59.02$_{\textcolor{orange}{(\uparrow 2.41)}}$ & 62.48$_{\textcolor{orange}{(\uparrow 2.43)}}$ & 50.65$_{\textcolor{orange}{(\uparrow 0.81)}}$ & 46.76$_{\textcolor{orange}{(\uparrow 1.04)}}$ & 60.68$_{\textcolor{orange}{(\uparrow 1.28)}}$ & 57.15\\
RAG & 61.65$_{\textcolor{orange}{(\uparrow 3.81)}}$ & 62.39$_{\textcolor{orange}{(\uparrow 3.50)}}$ & 62.95$_{\textcolor{orange}{(\uparrow 6.34)}}$ & 65.37$_{\textcolor{orange}{(\uparrow 5.32)}}$ & 58.03$_{\textcolor{orange}{(\uparrow 8.19)}}$ & 54.50$_{\textcolor{orange}{(\uparrow 8.78)}}$ & 62.83$_{\textcolor{orange}{(\uparrow 3.43)}}$ & 61.10  \\ \hdashline
MoDULA & 56.89$_{\textcolor{mygreen}{(\downarrow 0.95)}}$ & 58.72$_{\textcolor{mygreen}{(\downarrow 0.17)}}$ & 57.95$_{\textcolor{orange}{(\uparrow 1.34)}}$ & 61.22$_{\textcolor{orange}{(\uparrow 1.17)}}$ & 51.60$_{\textcolor{orange}{(\uparrow 1.76)}}$ & 48.42$_{\textcolor{orange}{(\uparrow 2.70)}}$ & 60.53$_{\textcolor{orange}{(\uparrow 1.13)}}$ & 56.47 \\
MoDE & 59.72$_{\textcolor{orange}{(\uparrow 1.88)}}$ & 63.05$_{\textcolor{orange}{(\uparrow 4.16)}}$ & 62.45$_{\textcolor{orange}{(\uparrow 5.84)}}$ & 64.69$_{\textcolor{orange}{(\uparrow 4.64)}}$ & 56.79$_{\textcolor{orange}{(\uparrow 6.95)}}$ & 54.11$_{\textcolor{orange}{(\uparrow 8.39)}}$ & 61.06$_{\textcolor{orange}{(\uparrow 1.66)}}$ & 60.12 \\
DES-MoE & 58.88$_{\textcolor{orange}{(\uparrow 1.04)}}$ & 62.34$_{\textcolor{orange}{(\uparrow 3.45)}}$ & 60.70$_{\textcolor{orange}{(\uparrow 4.09)}}$ & 62.63$_{\textcolor{orange}{(\uparrow 2.58)}}$ & 54.84$_{\textcolor{orange}{(\uparrow 5.00)}}$ & 52.98$_{\textcolor{orange}{(\uparrow 7.26)}}$ & 61.33$_{\textcolor{orange}{(\uparrow 1.93)}}$ & 59.10 \\ \hdashline
G-Designer & 64.82$_{\textcolor{orange}{(\uparrow 6.98)}}$ & 66.57$_{\textcolor{orange}{(\uparrow 7.68)}}$ & 63.91$_{\textcolor{orange}{(\uparrow 7.30)}}$ & 67.70$_{\textcolor{orange}{(\uparrow 7.65)}}$ & 61.58$_{\textcolor{orange}{(\uparrow 11.74)}}$ & 59.13$_{\textcolor{orange}{(\uparrow 13.41)}}$ & 65.44$_{\textcolor{orange}{(\uparrow 6.04)}}$ & 64.16  \\
AgentPrune & 63.99$_{\textcolor{orange}{(\uparrow 6.15)}}$ & 66.85$_{\textcolor{orange}{(\uparrow 7.96)}}$ & 64.73$_{\textcolor{orange}{(\uparrow 8.12)}}$ & 66.98$_{\textcolor{orange}{(\uparrow 6.93)}}$ & 63.07$_{\textcolor{orange}{(\uparrow 13.23)}}$ & 58.51$_{\textcolor{orange}{(\uparrow 12.79)}}$ & 64.12$_{\textcolor{orange}{(\uparrow 4.72)}}$ & 64.04 \\
ARG-Designer & 65.48$_{\textcolor{orange}{(\uparrow 7.64)}}$ & 68.01$_{\textcolor{orange}{(\uparrow 9.12)}}$ & 64.95$_{\textcolor{orange}{(\uparrow 8.34)}}$ & 69.54$_{\textcolor{orange}{(\uparrow 9.49)}}$ & 63.86$_{\textcolor{orange}{(\uparrow 14.02)}}$ & 61.32$_{\textcolor{orange}{(\uparrow 15.60)}}$ & 65.87$_{\textcolor{orange}{(\uparrow 6.47)}}$ & 65.58 \\
AgentDropout & 64.30$_{\textcolor{orange}{(\uparrow 6.46)}}$ & 69.72$_{\textcolor{orange}{(\uparrow 10.83)}}$ & 63.14$_{\textcolor{orange}{(\uparrow 6.53)}}$ & 68.86$_{\textcolor{orange}{(\uparrow 8.81)}}$ & 65.47$_{\textcolor{orange}{(\uparrow 15.63)}}$ & 62.05$_{\textcolor{orange}{(\uparrow 16.33)}}$ & 66.53$_{\textcolor{orange}{(\uparrow 7.13)}}$ & 65.72 \\ \hdashline
\rowcolor{mylightgray} {\texttt{TopoPrior}+GD} & \underline{66.32}$_{\textcolor{orange}{(\uparrow 8.48)}}$ & 67.80$_{\textcolor{orange}{(\uparrow 8.91)}}$ & 65.42$_{\textcolor{orange}{(\uparrow 8.81)}}$ & \underline{70.41}$_{\textcolor{orange}{(\uparrow 10.36)}}$ & 63.96$_{\textcolor{orange}{(\uparrow 14.12)}}$ & 60.85$_{\textcolor{orange}{(\uparrow 15.13)}}$ & 67.18$_{\textcolor{orange}{(\uparrow 7.78)}}$ & 65.99  \\
\rowcolor{mylightgray} {\texttt{TopoPrior}+AP} & 65.34$_{\textcolor{orange}{(\uparrow 7.50)}}$ & 68.75$_{\textcolor{orange}{(\uparrow 9.86)}}$ & 65.19$_{\textcolor{orange}{(\uparrow 8.58)}}$ & 69.16$_{\textcolor{orange}{(\uparrow 9.11)}}$ & 63.93$_{\textcolor{orange}{(\uparrow 14.09)}}$ & 61.80$_{\textcolor{orange}{(\uparrow 16.08)}}$ & 67.62$_{\textcolor{orange}{(\uparrow 8.22)}}$ & 65.97 \\
\rowcolor{mylightgray} {\texttt{TopoPrior}+ARG} & \textbf{68.91}$_{\textcolor{orange}{(\uparrow 11.07)}}$ & \textbf{71.57}$_{\textcolor{orange}{(\uparrow 12.68)}}$ & \textbf{68.13}$_{\textcolor{orange}{(\uparrow 11.52)}}$ & \textbf{72.74}$_{\textcolor{orange}{(\uparrow 12.69)}}$ & \underline{66.15}$_{\textcolor{orange}{(\uparrow 16.31)}}$ & \textbf{64.08}$_{\textcolor{orange}{(\uparrow 18.36)}}$ & \underline{68.14}$_{\textcolor{orange}{(\uparrow 8.74)}}$ & \textbf{68.53} \\
\rowcolor{mylightgray} {\texttt{TopoPrior}+AD} & 65.70$_{\textcolor{orange}{(\uparrow 7.86)}}$ & \underline{69.95}$_{\textcolor{orange}{(\uparrow 11.06)}}$ & \underline{65.47}$_{\textcolor{orange}{(\uparrow 8.86)}}$ & 70.36$_{\textcolor{orange}{(\uparrow 10.31)}}$ & \textbf{66.88}$_{\textcolor{orange}{(\uparrow 17.04)}}$ & \underline{63.75}$_{\textcolor{orange}{(\uparrow 18.03)}}$ & \textbf{69.05}$_{\textcolor{orange}{(\uparrow 9.65)}}$ & \underline{67.31} \\ \midrule
\addlinespace[0.3pt]
\midrule
\multicolumn{9}{c}{Base model: DeepSeek-V3-671B-Instruct} \\
\midrule
\addlinespace[0.3pt]
\midrule

PE & 84.96 & 85.23 & 84.78 & 86.41 & 79.63 & 78.92 & 85.12 & 84.47 \\
CoT & 84.60$_{\textcolor{mygreen}{(\downarrow 0.36)}}$ & 84.85$_{\textcolor{mygreen}{(\downarrow 0.38)}}$ & 85.32$_{\textcolor{orange}{(\uparrow 0.54)}}$ & 86.87$_{\textcolor{orange}{(\uparrow 0.46)}}$ & 80.19$_{\textcolor{orange}{(\uparrow 0.56)}}$ & 79.52$_{\textcolor{orange}{(\uparrow 0.60)}}$ & 85.57$_{\textcolor{orange}{(\uparrow 0.45)}}$ & 84.90 \\
RAG & 85.92$_{\textcolor{orange}{(\uparrow 0.96)}}$ & 86.17$_{\textcolor{orange}{(\uparrow 0.94)}}$ & 85.84$_{\textcolor{orange}{(\uparrow 1.06)}}$ & 87.63$_{\textcolor{orange}{(\uparrow 1.22)}}$ & 82.45$_{\textcolor{orange}{(\uparrow 2.82)}}$ & 81.78$_{\textcolor{orange}{(\uparrow 2.86)}}$ & 86.05$_{\textcolor{orange}{(\uparrow 0.93)}}$ & 85.38  \\ \hdashline
G-Designer & 89.23$_{\textcolor{orange}{(\uparrow 4.27)}}$ & 89.67$_{\textcolor{orange}{(\uparrow 4.44)}}$ & 88.91$_{\textcolor{orange}{(\uparrow 4.13)}}$ & 91.08$_{\textcolor{orange}{(\uparrow 4.67)}}$ & 84.12$_{\textcolor{orange}{(\uparrow 4.49)}}$ & 83.46$_{\textcolor{orange}{(\uparrow 4.54)}}$ & 89.35$_{\textcolor{orange}{(\uparrow 4.23)}}$ & 88.52  \\
AgentPrune & 89.84$_{\textcolor{orange}{(\uparrow 4.88)}}$ & 90.03$_{\textcolor{orange}{(\uparrow 4.80)}}$ & 89.52$_{\textcolor{orange}{(\uparrow 4.74)}}$ & 91.47$_{\textcolor{orange}{(\uparrow 5.06)}}$ & 84.73$_{\textcolor{orange}{(\uparrow 5.10)}}$ & 83.98$_{\textcolor{orange}{(\uparrow 5.06)}}$ & 89.76$_{\textcolor{orange}{(\uparrow 4.64)}}$ & 89.19 \\
ARG-Designer & 91.15$_{\textcolor{orange}{(\uparrow 6.19)}}$ & 91.80$_{\textcolor{orange}{(\uparrow 6.57)}}$ & 91.07$_{\textcolor{orange}{(\uparrow 6.29)}}$ & 93.69$_{\textcolor{orange}{(\uparrow 7.28)}}$ & 86.28$_{\textcolor{orange}{(\uparrow 6.65)}}$ & 85.52$_{\textcolor{orange}{(\uparrow 6.60)}}$ & 91.48$_{\textcolor{orange}{(\uparrow 6.36)}}$ & 90.14 \\
AgentDropout & 89.97$_{\textcolor{orange}{(\uparrow 5.01)}}$ & 90.50$_{\textcolor{orange}{(\uparrow 5.27)}}$ & 90.11$_{\textcolor{orange}{(\uparrow 5.33)}}$ & 91.93$_{\textcolor{orange}{(\uparrow 5.52)}}$ & 85.29$_{\textcolor{orange}{(\uparrow 5.66)}}$ & 84.89$_{\textcolor{orange}{(\uparrow 5.97)}}$ & 90.25$_{\textcolor{orange}{(\uparrow 5.13)}}$ & 88.99 \\ \hdashline
\rowcolor{mylightgray} {\texttt{TopoPrior}+GD} & 90.12$_{\textcolor{orange}{(\uparrow 5.16)}}$ & 90.46$_{\textcolor{orange}{(\uparrow 5.23)}}$ & 89.82$_{\textcolor{orange}{(\uparrow 5.04)}}$ & 92.37$_{\textcolor{orange}{(\uparrow 5.96)}}$ & 85.43$_{\textcolor{orange}{(\uparrow 5.80)}}$ & 84.71$_{\textcolor{orange}{(\uparrow 5.79)}}$ & 90.28$_{\textcolor{orange}{(\uparrow 5.16)}}$ & 89.89  \\
\rowcolor{mylightgray} {\texttt{TopoPrior}+AP} & 90.75$_{\textcolor{orange}{(\uparrow 5.79)}}$ & 90.90$_{\textcolor{orange}{(\uparrow 5.67)}}$ & 90.41$_{\textcolor{orange}{(\uparrow 5.63)}}$ & 92.89$_{\textcolor{orange}{(\uparrow 6.48)}}$ & 86.01$_{\textcolor{orange}{(\uparrow 6.38)}}$ & 85.31$_{\textcolor{orange}{(\uparrow 6.39)}}$ & 90.65$_{\textcolor{orange}{(\uparrow 5.53)}}$ & 90.41 \\
\rowcolor{mylightgray} {\texttt{TopoPrior}+ARG} & \textbf{92.87}$_{\textcolor{orange}{(\uparrow 7.91)}}$ & \textbf{93.14}$_{\textcolor{orange}{(\uparrow 7.91)}}$ & \textbf{92.55}$_{\textcolor{orange}{(\uparrow 7.77)}}$ & \textbf{95.12}$_{\textcolor{orange}{(\uparrow 8.71)}}$ & \textbf{87.83}$_{\textcolor{orange}{(\uparrow 8.20)}}$ & \textbf{87.08}$_{\textcolor{orange}{(\uparrow 8.16)}}$ & \textbf{92.79}$_{\textcolor{orange}{(\uparrow 7.67)}}$ & \textbf{92.03} \\
\rowcolor{mylightgray} {\texttt{TopoPrior}+AD} & \underline{91.78}$_{\textcolor{orange}{(\uparrow 6.82)}}$ & \underline{92.09}$_{\textcolor{orange}{(\uparrow 6.86)}}$ & \underline{91.33}$_{\textcolor{orange}{(\uparrow 6.55)}}$ & \underline{93.95}$_{\textcolor{orange}{(\uparrow 7.54)}}$ & \underline{86.95}$_{\textcolor{orange}{(\uparrow 7.32)}}$ & \underline{86.51}$_{\textcolor{orange}{(\uparrow 7.59)}}$ & \underline{91.74}$_{\textcolor{orange}{(\uparrow 6.62)}}$ & \underline{90.62} \\

\midrule
\addlinespace[0.3pt]
\midrule
\multicolumn{9}{c}{Base model: Qwen2.5-72B-Instruct} \\
\midrule
\addlinespace[0.3pt]
\midrule

PE & 84.17 & 84.44 & 83.86 & 85.53 & 79.84 & 79.19 & 83.92 & 83.26 \\
CoT & 85.02$_{\textcolor{orange}{(\uparrow 0.85)}}$ & 84.40$_{\textcolor{mygreen}{(\downarrow 0.04)}}$ & 84.79$_{\textcolor{orange}{(\uparrow 0.93)}}$ & 86.71$_{\textcolor{orange}{(\uparrow 1.18)}}$ & 81.17$_{\textcolor{orange}{(\uparrow 1.33)}}$ & 80.42$_{\textcolor{orange}{(\uparrow 1.23)}}$ & 85.07$_{\textcolor{orange}{(\uparrow 1.15)}}$ & 84.41\\
RAG & 85.71$_{\textcolor{orange}{(\uparrow 1.54)}}$ & 86.02$_{\textcolor{orange}{(\uparrow 1.58)}}$ & 85.48$_{\textcolor{orange}{(\uparrow 1.62)}}$ & 87.43$_{\textcolor{orange}{(\uparrow 1.90)}}$ & 82.33$_{\textcolor{orange}{(\uparrow 2.49)}}$ & 81.58$_{\textcolor{orange}{(\uparrow 2.39)}}$ & 85.73$_{\textcolor{orange}{(\uparrow 1.81)}}$ & 85.56  \\ \hdashline
G-Designer & 87.42$_{\textcolor{orange}{(\uparrow 3.25)}}$ & 87.55$_{\textcolor{orange}{(\uparrow 3.11)}}$ & 87.20$_{\textcolor{orange}{(\uparrow 3.34)}}$ & 88.97$_{\textcolor{orange}{(\uparrow 3.44)}}$ & 83.19$_{\textcolor{orange}{(\uparrow 3.35)}}$ & 82.58$_{\textcolor{orange}{(\uparrow 3.39)}}$ & 87.43$_{\textcolor{orange}{(\uparrow 3.51)}}$ & 86.33  \\
AgentPrune & 86.67$_{\textcolor{orange}{(\uparrow 2.50)}}$ & 86.98$_{\textcolor{orange}{(\uparrow 2.54)}}$ & 86.44$_{\textcolor{orange}{(\uparrow 2.58)}}$ & 88.42$_{\textcolor{orange}{(\uparrow 2.89)}}$ & 82.56$_{\textcolor{orange}{(\uparrow 2.72)}}$ & 81.81$_{\textcolor{orange}{(\uparrow 2.62)}}$ & 86.69$_{\textcolor{orange}{(\uparrow 2.77)}}$ & 85.71 \\
ARG-Designer & 88.50$_{\textcolor{orange}{(\uparrow 4.33)}}$ & 88.87$_{\textcolor{orange}{(\uparrow 4.43)}}$ & 88.41$_{\textcolor{orange}{(\uparrow 4.55)}}$ & 90.40$_{\textcolor{orange}{(\uparrow 4.87)}}$ & 84.49$_{\textcolor{orange}{(\uparrow 4.65)}}$ & 83.55$_{\textcolor{orange}{(\uparrow 4.36)}}$ & 88.69$_{\textcolor{orange}{(\uparrow 4.77)}}$ & 87.56 \\
AgentDropout & 87.63$_{\textcolor{orange}{(\uparrow 3.46)}}$ & 87.79$_{\textcolor{orange}{(\uparrow 3.35)}}$ & 87.40$_{\textcolor{orange}{(\uparrow 3.54)}}$ & 89.32$_{\textcolor{orange}{(\uparrow 3.79)}}$ & 83.59$_{\textcolor{orange}{(\uparrow 3.75)}}$ & 82.70$_{\textcolor{orange}{(\uparrow 3.51)}}$ & 87.74$_{\textcolor{orange}{(\uparrow 3.82)}}$ & 86.60 \\ \hdashline
\rowcolor{mylightgray} {\texttt{TopoPrior}+GD} & 88.31$_{\textcolor{orange}{(\uparrow 4.14)}}$ & 88.44$_{\textcolor{orange}{(\uparrow 4.00)}}$ & 88.13$_{\textcolor{orange}{(\uparrow 4.27)}}$ & 90.01$_{\textcolor{orange}{(\uparrow 4.48)}}$ & 84.30$_{\textcolor{orange}{(\uparrow 4.46)}}$ & 83.29$_{\textcolor{orange}{(\uparrow 4.10)}}$ & 88.32$_{\textcolor{orange}{(\uparrow 4.40)}}$ & 87.26  \\
\rowcolor{mylightgray} {\texttt{TopoPrior}+AP} & 87.74$_{\textcolor{orange}{(\uparrow 3.57)}}$ & 87.61$_{\textcolor{orange}{(\uparrow 3.27)}}$ & 87.49$_{\textcolor{orange}{(\uparrow 3.63)}}$ & 89.33$_{\textcolor{orange}{(\uparrow 3.80)}}$ & 83.58$_{\textcolor{orange}{(\uparrow 3.74)}}$ & 82.73$_{\textcolor{orange}{(\uparrow 3.54)}}$ & 87.64$_{\textcolor{orange}{(\uparrow 3.72)}}$ & 86.59 \\
\rowcolor{mylightgray} {\texttt{TopoPrior}+ARG} & \textbf{90.87}$_{\textcolor{orange}{(\uparrow 6.70)}}$ & \textbf{91.18}$_{\textcolor{orange}{(\uparrow 6.74)}}$ & \textbf{90.64}$_{\textcolor{orange}{(\uparrow 6.78)}}$ & \textbf{92.62}$_{\textcolor{orange}{(\uparrow 7.09)}}$ & \textbf{86.76}$_{\textcolor{orange}{(\uparrow 6.92)}}$ & \textbf{86.01}$_{\textcolor{orange}{(\uparrow 6.82)}}$ & \textbf{90.89}$_{\textcolor{orange}{(\uparrow 6.97)}}$ & \textbf{89.56} \\
\rowcolor{mylightgray} {\texttt{TopoPrior}+AD} & \underline{89.48}$_{\textcolor{orange}{(\uparrow 5.31)}}$ & \underline{89.62}$_{\textcolor{orange}{(\uparrow 5.18)}}$ & \underline{89.35}$_{\textcolor{orange}{(\uparrow 5.49)}}$ & \underline{91.26}$_{\textcolor{orange}{(\uparrow 5.73)}}$ & \underline{85.45}$_{\textcolor{orange}{(\uparrow 5.61)}}$ & \underline{84.75}$_{\textcolor{orange}{(\uparrow 5.56)}}$ & \underline{89.89}$_{\textcolor{orange}{(\uparrow 5.97)}}$ & \underline{88.54} \\

\bottomrule
\end{tabular}
\label{main_res_mmlu}
\vspace{-.5em}
\end{table*}

\subsection{Implementation Details}
\label{implementation_settings}
We implement \texttt{TopoPrior} in PyTorch and conduct experiments on two NVIDIA A800 GPUs. Unless otherwise specified, \texttt{TopoPrior} is trained separately on each benchmark under its corresponding multi-domain partition.

\noindent\textbf{Backbones and encoders.}
Agent roles are instantiated using LLM backbones, including the locally deployed Llama3-8B-Instruct~\cite{touvron2023} and the online models Qwen2.5-72B-Instruct~\cite{DBLP:journals/corr/abs-2412-15115} and DeepSeek-V3-671B-Instruct~\cite{deepseekai2025}. The query representation $\mathbf{h}_q$ and the initial agent-role representation $\mathbf{h}_v^{(0)}$ are encoded using a frozen BERT encoder~\cite{DBLP:conf/naacl/DevlinCLT19}. Unless otherwise stated, gold domain labels are used only for benchmark partitioning and analysis and are not provided to \texttt{TopoPrior} at inference time.

\noindent\textbf{\texttt{TopoPrior} architecture.}
The variational encoder is implemented as a two-layer GCN~\cite{DBLP:conf/iclr/KipfW17} with hidden size $d_{\mathbf{h}_v}=256$, latent dimension $d_z=128$, and ReLU activation~\cite{DBLP:journals/corr/abs-1803-08375}. Linear projections are implemented with two-layer MLPs. For node and edge history encoding, we use a two-layer GRU~\cite{DBLP:journals/corr/ChungGCB14}. The latent-space discriminator is implemented as a two-layer MLP with a softmax output layer. The edge-generation threshold is set to $\delta_e=0.5$, and the gradient reversal layer uses a coefficient of $-0.1$.

\noindent\textbf{Training protocol.}
We optimize the model with Adam using a learning rate of $2\times 10^{-4}$ and a batch size of 32. The alignment coefficients are set to $\alpha=0.5$, $\beta=0.5$, and the topology-prior generator is trained for five epochs. Reference collaboration graphs are constructed offline on the training split using AgentDropout~\cite{DBLP:conf/acl/WangW00Z0025}. During evaluation, the generated initial graph is passed to the corresponding downstream topology-evolution backbone for task-specific refinement. For fair comparison, integrating \texttt{TopoPrior} changes only the initialization stage of each backbone while keeping its downstream refinement procedure and search budget unchanged, unless otherwise specified.

\noindent\textbf{Role pool and graph construction.}
The conditional generator $p_\theta(\mathbb{G}\mid z,q)$ autoregressively selects agent nodes from an extensible role pool. Table~\ref{agent_roles} summarizes the agent roles associated with the major domains in MMLU and C-Eval. For natural-science tasks, we include both a generic \textit{Natural Science Expert} and specialized roles (e.g., \textit{Mathematical Expert}, \textit{Chemistry Specialist}, and \textit{Physics Specialist}) to better support challenging subdomains. For efficient offline supervision construction, we use AgentDropout~\cite{DBLP:conf/acl/WangW00Z0025} to obtain reference communication topologies, as it is substantially faster than more iterative topology-construction methods on large training sets. Each agent node $v_{i,j}^{k}=\{\texttt{LLM}, \texttt{Role}, \texttt{State}, \texttt{Plugins}\}$ encodes the underlying language model, assigned role, current state, and any attached tools or plugins~\cite{DBLP:conf/iclr/ZhangYLYWWCY025}.

\noindent\textbf{Evaluation protocol.}
Unless otherwise specified, we report classification accuracy (Acc.) on domain-specific test splits. For efficiency analysis, token consumption is measured as the total number of LLM input and output tokens incurred during online inference, including inter-agent communication and final answer generation, but excluding the offline cost of reference-graph construction and topology-prior training. We report these metrics as online inference efficiency and discuss offline supervision and training cost separately when relevant. For ARG-Designer~\cite{DBLP:conf/aaai/LiLWZP26}, which generates graph topologies autoregressively from scratch, we incorporate the latent representation from \texttt{TopoPrior} into its initial query encoding (i.e., $\mathbf{f}_{\mathcal{Q}}$ in its Eq.~(6)) as the initialization signal.

\begin{table*}[!t]
\centering
\caption{Results on C-Eval and C-Eval Hard across domains with LLM backbones . ``GD'', ``AP'', ``ARG'', and ``AD'' denote G-Designer, AgentPrune, ARG-Designer, and AgentDropout. Best results are in bold and second-best results are underlined.}
\setlength{\tabcolsep}{2pt}
\begin{tabular}{cccccccccc}
\toprule
\multicolumn{1}{c}{\multirow{1}{*}{\textbf{Domain}$ \quad \rightarrow$}} & 
\multirow{2}{*}{ \makecell{\textbf{Natural} \\ \textbf{Sciences}}} &
\multirow{2}{*}{\makecell{\textbf{Engineering} \\ \textbf{Technology}}} & 
\multirow{2}{*}{\makecell{\textbf{Social Sciences} \\ \textbf{Humanities}}} & 
\multirow{2}{*}{\makecell{\textbf{Vocational Qualif.} \\ \textbf{Prof. Exam.}}} & 
\multirow{2}{*}{\makecell{\textbf{Medicine} \\ \textbf{Life Sci.}}} & 
\multicolumn{1}{c}{\cellcolor[gray]{0.9}} &
\multicolumn{1}{c}{\cellcolor[gray]{0.9}\textbf{C-Eval Hard}} &
\multicolumn{1}{c}{\cellcolor[gray]{0.9}} & \multirow{2}{*}{\textbf{Average}} \\
\multicolumn{1}{c}{\multirow{1}{*}{\textbf{Models}$  \quad \downarrow$}}  & & & & & & \multirow{1}{*}{\makecell{\textbf{Math}}} & \multirow{1}{*}{\makecell{\textbf{Chemistry}}} & \multirow{1}{*}{\textbf{Physics}}   \\ \midrule
\addlinespace[0.3pt]
\midrule
\multicolumn{9}{c}{Base model: Llama3-8B-Instruct} \\
\midrule
\addlinespace[0.3pt]
\midrule
PE & 54.73 & 53.64 & 55.02 & 48.69 & 46.25 & 41.83 & 44.06 & 40.12 & 48.04 \\
CoT & 55.85$_{\textcolor{orange}{(\uparrow 1.12)}}$ & 54.47$_{\textcolor{orange}{(\uparrow 0.83)}}$ & 55.48$_{\textcolor{orange}{(\uparrow 0.46)}}$ & 50.36$_{\textcolor{orange}{(\uparrow 1.67)}}$ & 48.07$_{\textcolor{orange}{(\uparrow 1.82)}}$ & 42.99$_{\textcolor{orange}{(\uparrow 1.16)}}$ & 45.80$_{\textcolor{orange}{(\uparrow 1.74)}}$ & 41.84$_{\textcolor{orange}{(\uparrow 1.72)}}$ & 49.36 \\
RAG & 57.45$_{\textcolor{orange}{(\uparrow 2.72)}}$ & 56.59$_{\textcolor{orange}{(\uparrow 2.95)}}$ & 56.98$_{\textcolor{orange}{(\uparrow 1.96)}}$ & 51.75$_{\textcolor{orange}{(\uparrow 3.06)}}$ & 50.58$_{\textcolor{orange}{(\uparrow 4.33)}}$ & 44.83$_{\textcolor{orange}{(\uparrow 3.00)}}$ & 47.06$_{\textcolor{orange}{(\uparrow 3.00)}}$ & 44.12$_{\textcolor{orange}{(\uparrow 4.00)}}$ & 51.17 \\ \hdashline
MoDULA & 54.37$_{\textcolor{mygreen}{(\downarrow 0.36)}}$ & 52.95$_{\textcolor{mygreen}{(\downarrow 0.69)}}$ & 55.84$_{\textcolor{orange}{(\uparrow 0.82)}}$ & 49.72$_{\textcolor{orange}{(\uparrow 1.03)}}$ & 47.55$_{\textcolor{orange}{(\uparrow 1.30)}}$ & 41.80$_{\textcolor{mygreen}{(\downarrow 0.03)}}$ & 45.62$_{\textcolor{orange}{(\uparrow 1.56)}}$ & 41.23$_{\textcolor{orange}{(\uparrow 1.11)}}$ & 48.64 \\
MoDE & 54.61$_{\textcolor{orange}{(\uparrow 0.12)}}$ & 53.97$_{\textcolor{orange}{(\uparrow 0.33)}}$ & 56.14$_{\textcolor{orange}{(\uparrow 1.12)}}$ & 50.80$_{\textcolor{orange}{(\uparrow 2.11)}}$ & 49.25$_{\textcolor{orange}{(\uparrow 3.00)}}$ & 42.77$_{\textcolor{orange}{(\uparrow 0.94)}}$ & 44.02$_{\textcolor{mygreen}{(\downarrow 0.04)}}$ & 42.52$_{\textcolor{orange}{(\uparrow 2.40)}}$ & 49.26 \\
DES-MoE & 55.15$_{\textcolor{orange}{(\uparrow 0.42)}}$ & 53.42$_{\textcolor{mygreen}{(\downarrow 0.22)}}$ & 56.50$_{\textcolor{orange}{(\uparrow 1.48)}}$ & 51.31$_{\textcolor{orange}{(\uparrow 2.62)}}$ & 48.84$_{\textcolor{orange}{(\uparrow 2.59)}}$ & 43.96$_{\textcolor{orange}{(\uparrow 2.13)}}$ & 43.58$_{\textcolor{mygreen}{(\downarrow 0.48)}}$ & 41.96$_{\textcolor{orange}{(\uparrow 1.84)}}$ & 49.34 \\ \hdashline
G-Designer & \underline{59.60}$_{\textcolor{orange}{(\uparrow 4.87)}}$ & 57.86$_{\textcolor{orange}{(\uparrow 4.22)}}$ & 57.15$_{\textcolor{orange}{(\uparrow 2.13)}}$ & 54.08$_{\textcolor{orange}{(\uparrow 5.39)}}$ & 52.63$_{\textcolor{orange}{(\uparrow 6.38)}}$ & 45.91$_{\textcolor{orange}{(\uparrow 4.08)}}$ & 50.11$_{\textcolor{orange}{(\uparrow 6.05)}}$ & 46.32$_{\textcolor{orange}{(\uparrow 6.20)}}$ & 52.96  \\
AgentPrune & 59.48$_{\textcolor{orange}{(\uparrow 4.75)}}$ & 58.16$_{\textcolor{orange}{(\uparrow 4.52)}}$ & 56.93$_{\textcolor{orange}{(\uparrow 1.91)}}$ & 54.50$_{\textcolor{orange}{(\uparrow 5.81)}}$ & 52.09$_{\textcolor{orange}{(\uparrow 5.84)}}$ & 45.24$_{\textcolor{orange}{(\uparrow 3.41)}}$ & 50.85$_{\textcolor{orange}{(\uparrow 6.79)}}$ & 45.88$_{\textcolor{orange}{(\uparrow 5.76)}}$ & 52.89  \\
ARG-Designer & 60.82$_{\textcolor{orange}{(\uparrow 6.09)}}$ & 59.52$_{\textcolor{orange}{(\uparrow 5.88)}}$ & 57.11$_{\textcolor{orange}{(\uparrow 2.09)}}$ & 56.27$_{\textcolor{orange}{(\uparrow 7.58)}}$ & 53.85$_{\textcolor{orange}{(\uparrow 7.60)}}$ & 46.73$_{\textcolor{orange}{(\uparrow 4.90)}}$ & 52.59$_{\textcolor{orange}{(\uparrow 8.53)}}$ & 47.14$_{\textcolor{orange}{(\uparrow 7.02)}}$ & 54.25 \\
AgentDropout & 60.40$_{\textcolor{orange}{(\uparrow 5.67)}}$ & 58.86$_{\textcolor{orange}{(\uparrow 5.22)}}$ & 57.10$_{\textcolor{orange}{(\uparrow 2.08)}}$ & 55.79$_{\textcolor{orange}{(\uparrow 7.10)}}$ & 53.54$_{\textcolor{orange}{(\uparrow 7.29)}}$ & 45.98$_{\textcolor{orange}{(\uparrow 4.15)}}$ & 52.10$_{\textcolor{orange}{(\uparrow 8.04)}}$ & 46.96$_{\textcolor{orange}{(\uparrow 6.84)}}$ & 53.84 \\ \hdashline
\rowcolor{mylightgray} {\texttt{TopoPrior}+GD} & 61.83$_{\textcolor{orange}{(\uparrow 7.10)}}$ & 58.72$_{\textcolor{orange}{(\uparrow 5.08)}}$ & 59.05$_{\textcolor{orange}{(\uparrow 4.03)}}$ & 56.51$_{\textcolor{orange}{(\uparrow 7.82)}}$ & 54.30$_{\textcolor{orange}{(\uparrow 8.05)}}$ & 46.95$_{\textcolor{orange}{(\uparrow 5.12)}}$ & 52.63$_{\textcolor{orange}{(\uparrow 8.57)}}$ & 48.41$_{\textcolor{orange}{(\uparrow 8.29)}}$ & 54.80 \\
\rowcolor{mylightgray} {\texttt{TopoPrior}+AP} & 61.16$_{\textcolor{orange}{(\uparrow 6.43)}}$ & 60.35$_{\textcolor{orange}{(\uparrow 6.71)}}$ & 58.84$_{\textcolor{orange}{(\uparrow 3.82)}}$ & 56.07$_{\textcolor{orange}{(\uparrow 7.38)}}$ & 53.98$_{\textcolor{orange}{(\uparrow 7.73)}}$ & 47.72$_{\textcolor{orange}{(\uparrow 5.89)}}$ & 52.93$_{\textcolor{orange}{(\uparrow 8.87)}}$ & 48.22$_{\textcolor{orange}{(\uparrow 8.10)}}$ & 54.91 \\
\rowcolor{mylightgray} {\texttt{TopoPrior}+ARG} & \textbf{63.54}$_{\textcolor{orange}{(\uparrow 8.81)}}$ & \textbf{62.60}$_{\textcolor{orange}{(\uparrow 8.96)}}$ & \textbf{60.21}$_{\textcolor{orange}{(\uparrow 5.19)}}$ & \textbf{58.15}$_{\textcolor{orange}{(\uparrow 9.46)}}$ & \textbf{55.82}$_{\textcolor{orange}{(\uparrow 9.57)}}$ & \textbf{49.33}$_{\textcolor{orange}{(\uparrow 7.50)}}$ & \textbf{55.76}$_{\textcolor{orange}{(\uparrow 11.70)}}$ & \textbf{50.89}$_{\textcolor{orange}{(\uparrow 10.77)}}$ & \textbf{57.04} \\
\rowcolor{mylightgray} {\texttt{TopoPrior}+AD} & \underline{62.80}$_{\textcolor{orange}{(\uparrow 8.07)}}$ & \underline{61.42}$_{\textcolor{orange}{(\uparrow 7.78)}}$ & \underline{58.96}$_{\textcolor{orange}{(\uparrow 3.94)}}$ & \underline{57.40}$_{\textcolor{orange}{(\uparrow 8.71)}}$ & \underline{55.36}$_{\textcolor{orange}{(\uparrow 9.11)}}$ & \underline{48.44}$_{\textcolor{orange}{(\uparrow 6.61)}}$ & \underline{53.86}$_{\textcolor{orange}{(\uparrow 9.80)}}$ & \underline{49.87}$_{\textcolor{orange}{(\uparrow 9.75)}}$ & \underline{56.01}  \\ \midrule
\addlinespace[0.3pt]
\midrule
\multicolumn{9}{c}{Base model: DeepSeek-V3-671B-Instruct} \\
\midrule
\addlinespace[0.3pt]
\midrule

PE & 86.88 & 84.75 & 87.94 & 79.76 & 77.39 & 71.21 & 74.48 & 69.64 & 79.01 \\
CoT & 86.63$_{\textcolor{mygreen}{(\downarrow 0.25)}}$ & 85.62$_{\textcolor{orange}{(\uparrow 0.87)}}$ & 88.64$_{\textcolor{orange}{(\uparrow 0.70)}}$ & 81.75$_{\textcolor{orange}{(\uparrow 1.99)}}$ & 78.35$_{\textcolor{orange}{(\uparrow 0.96)}}$ & 72.78$_{\textcolor{orange}{(\uparrow 1.57)}}$ & 75.62$_{\textcolor{orange}{(\uparrow 1.14)}}$ & 70.81$_{\textcolor{orange}{(\uparrow 1.17)}}$ & 80.03 \\
RAG & 88.82$_{\textcolor{orange}{(\uparrow 1.94)}}$ & 86.88$_{\textcolor{orange}{(\uparrow 2.13)}}$ & 89.70$_{\textcolor{orange}{(\uparrow 1.76)}}$ & 81.92$_{\textcolor{orange}{(\uparrow 2.16)}}$ & 79.54$_{\textcolor{orange}{(\uparrow 2.15)}}$ & 73.39$_{\textcolor{orange}{(\uparrow 2.18)}}$ & 76.85$_{\textcolor{orange}{(\uparrow 2.37)}}$ & 72.03$_{\textcolor{orange}{(\uparrow 2.39)}}$ & 81.14 \\ \hdashline
G-Designer & 90.35$_{\textcolor{orange}{(\uparrow 3.47)}}$ & 88.52$_{\textcolor{orange}{(\uparrow 3.77)}}$ & 91.40$_{\textcolor{orange}{(\uparrow 3.46)}}$ & 83.43$_{\textcolor{orange}{(\uparrow 3.67)}}$ & 80.82$_{\textcolor{orange}{(\uparrow 3.43)}}$ & 75.28$_{\textcolor{orange}{(\uparrow 4.07)}}$ & 78.93$_{\textcolor{orange}{(\uparrow 4.45)}}$ & 74.14$_{\textcolor{orange}{(\uparrow 4.50)}}$ & 82.86 \\
AgentPrune & 90.62$_{\textcolor{orange}{(\uparrow 3.74)}}$ & 88.84$_{\textcolor{orange}{(\uparrow 4.09)}}$ & 90.98$_{\textcolor{orange}{(\uparrow 3.04)}}$ & 84.11$_{\textcolor{orange}{(\uparrow 4.35)}}$ & 81.08$_{\textcolor{orange}{(\uparrow 3.69)}}$ & 74.45$_{\textcolor{orange}{(\uparrow 3.24)}}$ & 81.15$_{\textcolor{orange}{(\uparrow 6.67)}}$ & 75.32$_{\textcolor{orange}{(\uparrow 5.68)}}$ & 83.32 \\
ARG-Designer & 91.49$_{\textcolor{orange}{(\uparrow 4.61)}}$ & 89.87$_{\textcolor{orange}{(\uparrow 5.12)}}$ & 91.90$_{\textcolor{orange}{(\uparrow 3.96)}}$ & {85.83}$_{\textcolor{orange}{(\uparrow 6.07)}}$ & 82.76$_{\textcolor{orange}{(\uparrow 5.37)}}$ & 76.50$_{\textcolor{orange}{(\uparrow 5.29)}}$ & 81.97$_{\textcolor{orange}{(\uparrow 7.49)}}$ & 76.06$_{\textcolor{orange}{(\uparrow 6.42)}}$ & 84.42 \\
AgentDropout & 90.83$_{\textcolor{orange}{(\uparrow 3.95)}}$ & 88.15$_{\textcolor{orange}{(\uparrow 3.40)}}$ & 91.22$_{\textcolor{orange}{(\uparrow 3.28)}}$ & 83.64$_{\textcolor{orange}{(\uparrow 3.88)}}$ & 81.66$_{\textcolor{orange}{(\uparrow 4.27)}}$ & 76.17$_{\textcolor{orange}{(\uparrow 4.96)}}$ & 80.94$_{\textcolor{orange}{(\uparrow 6.46)}}$ & 75.80$_{\textcolor{orange}{(\uparrow 6.16)}}$ & 83.55 \\ \hdashline
\rowcolor{mylightgray} {\texttt{TopoPrior}+GD} & \underline{91.80}$_{\textcolor{orange}{(\uparrow 4.92)}}$ & 89.67$_{\textcolor{orange}{(\uparrow 4.92)}}$ & \underline{92.52}$_{\textcolor{orange}{(\uparrow 4.58)}}$ & 85.33$_{\textcolor{orange}{(\uparrow 5.57)}}$ & 82.95$_{\textcolor{orange}{(\uparrow 5.56)}}$ & 76.84$_{\textcolor{orange}{(\uparrow 5.63)}}$ & 80.50$_{\textcolor{orange}{(\uparrow 6.02)}}$ & 76.69$_{\textcolor{orange}{(\uparrow 7.05)}}$ & 84.54 \\
\rowcolor{mylightgray} {\texttt{TopoPrior}+AP} & 91.19$_{\textcolor{orange}{(\uparrow 4.31)}}$ & \underline{90.51}$_{\textcolor{orange}{(\uparrow 5.76)}}$ & 92.27$_{\textcolor{orange}{(\uparrow 4.33)}}$ & 85.30$_{\textcolor{orange}{(\uparrow 5.54)}}$ & 82.78$_{\textcolor{orange}{(\uparrow 5.39)}}$ & 77.03$_{\textcolor{orange}{(\uparrow 5.82)}}$ & \underline{82.41}$_{\textcolor{orange}{(\uparrow 7.93)}}$ & \underline{76.98}$_{\textcolor{orange}{(\uparrow 7.34)}}$ & 84.81 \\
\rowcolor{mylightgray} {\texttt{TopoPrior}+ARG} & \textbf{93.02}$_{\textcolor{orange}{(\uparrow 6.14)}}$ & \textbf{92.46}$_{\textcolor{orange}{(\uparrow 7.71)}}$ & \textbf{93.08}$_{\textcolor{orange}{(\uparrow 5.14)}}$ & \textbf{86.68}$_{\textcolor{orange}{(\uparrow 6.92)}}$ & \textbf{84.83}$_{\textcolor{orange}{(\uparrow 7.44)}}$ & \textbf{78.31}$_{\textcolor{orange}{(\uparrow 7.10)}}$ & \textbf{83.94}$_{\textcolor{orange}{(\uparrow 9.46)}}$ & \textbf{78.06}$_{\textcolor{orange}{(\uparrow 8.42)}}$ & 86.30 \\
\rowcolor{mylightgray} {\texttt{TopoPrior}+AD} & 91.75$_{\textcolor{orange}{(\uparrow 4.87)}}$ & 90.32$_{\textcolor{orange}{(\uparrow 5.57)}}$ & 91.79$_{\textcolor{orange}{(\uparrow 3.85)}}$ &  \underline{85.96}$_{\textcolor{orange}{(\uparrow 5.00)}}$ & \underline{83.03}$_{\textcolor{orange}{(\uparrow 5.64)}}$ & \underline{77.14}$_{\textcolor{orange}{(\uparrow 5.93)}}$ & 81.95$_{\textcolor{orange}{(\uparrow 7.47)}}$ & 76.40$_{\textcolor{orange}{(\uparrow 6.76)}}$ & 84.79 \\

\midrule
\addlinespace[0.3pt]
\midrule
\multicolumn{9}{c}{Base model: Qwen2.5-72B-Instruct} \\
\midrule
\addlinespace[0.3pt]
\midrule

PE & 84.42 & 82.55 & 85.29 & 78.30 & 76.57 & 70.55 & 74.02 & 68.18 & 77.49 \\
CoT & 84.53$_{\textcolor{orange}{(\uparrow 0.11)}}$ & 82.10$_{\textcolor{mygreen}{(\downarrow 0.45)}}$ & 85.27$_{\textcolor{mygreen}{(\downarrow 0.02)}}$ & 80.19$_{\textcolor{orange}{(\uparrow 1.89)}}$ & 78.24$_{\textcolor{orange}{(\uparrow 1.67)}}$ & 71.79$_{\textcolor{orange}{(\uparrow 1.24)}}$ & 75.33$_{\textcolor{orange}{(\uparrow 1.31)}}$ & 69.85$_{\textcolor{orange}{(\uparrow 1.67)}}$ & 78.41 \\
RAG & 85.40$_{\textcolor{orange}{(\uparrow 0.98)}}$ & 84.06$_{\textcolor{orange}{(\uparrow 1.51)}}$ & 86.55$_{\textcolor{orange}{(\uparrow 1.26)}}$ & 82.12$_{\textcolor{orange}{(\uparrow 3.82)}}$ & 79.69$_{\textcolor{orange}{(\uparrow 3.12)}}$ & 73.40$_{\textcolor{orange}{(\uparrow 2.85)}}$ & 76.73$_{\textcolor{orange}{(\uparrow 2.71)}}$ & 71.31$_{\textcolor{orange}{(\uparrow 3.13)}}$ & 79.91 \\ \hdashline
G-Designer & 86.58$_{\textcolor{orange}{(\uparrow 2.16)}}$ & 85.21$_{\textcolor{orange}{(\uparrow 2.66)}}$ & 88.07$_{\textcolor{orange}{(\uparrow 2.78)}}$ & 84.29$_{\textcolor{orange}{(\uparrow 5.99)}}$ & 80.28$_{\textcolor{orange}{(\uparrow 3.71)}}$ & 74.99$_{\textcolor{orange}{(\uparrow 4.44)}}$ & 78.87$_{\textcolor{orange}{(\uparrow 4.85)}}$ & 73.96$_{\textcolor{orange}{(\uparrow 5.78)}}$ & 81.53 \\
AgentPrune & 86.03$_{\textcolor{orange}{(\uparrow 1.61)}}$ & 86.15$_{\textcolor{orange}{(\uparrow 3.60)}}$ & 88.22$_{\textcolor{orange}{(\uparrow 2.93)}}$ & 85.93$_{\textcolor{orange}{(\uparrow 7.63)}}$ & 80.79$_{\textcolor{orange}{(\uparrow 4.22)}}$ & 75.47$_{\textcolor{orange}{(\uparrow 4.92)}}$ & 79.34$_{\textcolor{orange}{(\uparrow 5.32)}}$ & 74.84$_{\textcolor{orange}{(\uparrow 6.66)}}$ & 82.10 \\
ARG-Designer & 87.63$_{\textcolor{orange}{(\uparrow 3.21)}}$ & \underline{87.72}$_{\textcolor{orange}{(\uparrow 5.17)}}$ & 89.16$_{\textcolor{orange}{(\uparrow 3.87)}}$ & 86.57$_{\textcolor{orange}{(\uparrow 8.27)}}$ & 81.85$_{\textcolor{orange}{(\uparrow 5.28)}}$ & 76.74$_{\textcolor{orange}{(\uparrow 6.19)}}$ & 81.41$_{\textcolor{orange}{(\uparrow 7.39)}}$ & 76.45$_{\textcolor{orange}{(\uparrow 8.27)}}$ & 82.19 \\
AgentDropout & 86.57$_{\textcolor{orange}{(\uparrow 2.15)}}$ & 85.70$_{\textcolor{orange}{(\uparrow 3.15)}}$ & 87.96$_{\textcolor{orange}{(\uparrow 2.67)}}$ & 85.52$_{\textcolor{orange}{(\uparrow 7.22)}}$ & 80.44$_{\textcolor{orange}{(\uparrow 3.87)}}$ & 75.30$_{\textcolor{orange}{(\uparrow 4.75)}}$ & 79.39$_{\textcolor{orange}{(\uparrow 5.37)}}$ & 74.68$_{\textcolor{orange}{(\uparrow 6.50)}}$ & 81.95 \\ \hdashline
\rowcolor{mylightgray} {\texttt{TopoPrior}+GD} & 87.65$_{\textcolor{orange}{(\uparrow 3.23)}}$ & 87.10$_{\textcolor{orange}{(\uparrow 4.55)}}$ & 88.74$_{\textcolor{orange}{(\uparrow 3.45)}}$ & 86.65$_{\textcolor{orange}{(\uparrow 8.35)}}$ & 82.11$_{\textcolor{orange}{(\uparrow 5.54)}}$ & 77.16$_{\textcolor{orange}{(\uparrow 6.61)}}$ & \underline{81.52}$_{\textcolor{orange}{(\uparrow 7.50)}}$ & 76.51$_{\textcolor{orange}{(\uparrow 8.33)}}$ & 82.18 \\
\rowcolor{mylightgray} {\texttt{TopoPrior}+AP} & \underline{88.67}$_{\textcolor{orange}{(\uparrow 4.25)}}$ & 86.84$_{\textcolor{orange}{(\uparrow 4.29)}}$ & \underline{89.73}$_{\textcolor{orange}{(\uparrow 4.44)}}$ & \underline{86.92}$_{\textcolor{orange}{(\uparrow 8.62)}}$ & 81.90$_{\textcolor{orange}{(\uparrow 5.33)}}$ & \underline{77.65}$_{\textcolor{orange}{(\uparrow 7.10)}}$ & 80.44$_{\textcolor{orange}{(\uparrow 6.42)}}$ & 76.69$_{\textcolor{orange}{(\uparrow 8.51)}}$ & 82.36 \\
\rowcolor{mylightgray} {\texttt{TopoPrior}+ARG} & \textbf{90.16}$_{\textcolor{orange}{(\uparrow 5.74)}}$ & \textbf{89.25}$_{\textcolor{orange}{(\uparrow 6.70)}}$ & \textbf{91.21}$_{\textcolor{orange}{(\uparrow 5.92)}}$ & \textbf{88.07}$_{\textcolor{orange}{(\uparrow 9.77)}}$ & \textbf{83.32}$_{\textcolor{orange}{(\uparrow 6.75)}}$ & \textbf{79.09}$_{\textcolor{orange}{(\uparrow 8.54)}}$ & \textbf{83.50}$_{\textcolor{orange}{(\uparrow 9.48)}}$ & \textbf{79.91}$_{\textcolor{orange}{(\uparrow 11.73)}}$ & 85.56 \\
\rowcolor{mylightgray} {\texttt{TopoPrior}+AD} & 88.59$_{\textcolor{orange}{(\uparrow 4.17)}}$ & 87.68$_{\textcolor{orange}{(\uparrow 5.13)}}$ & 88.64$_{\textcolor{orange}{(\uparrow 3.35)}}$ & 86.53$_{\textcolor{orange}{(\uparrow 8.23)}}$ & \underline{82.81}$_{\textcolor{orange}{(\uparrow 6.24)}}$ & 77.55$_{\textcolor{orange}{(\uparrow 7.00)}}$ & 81.36$_{\textcolor{orange}{(\uparrow 7.34)}}$ & \underline{77.14}$_{\textcolor{orange}{(\uparrow 8.96)}}$ & 83.79 \\
\bottomrule
\end{tabular}
\label{main_res_ceval}
\vspace{-.5em}
\end{table*}

\subsection{Main Results}
Table~\ref{main_res_mmlu} and Table~\ref{main_res_ceval} summarize the main results on MMLU and C-Eval under three LLM backbones. Overall, dynamic topology-evolution methods consistently outperform single-agent baselines in most evaluated settings, and equipping them with \texttt{TopoPrior} further improves performance in most cases. These results suggest that transferable topology priors can provide effective initialization for downstream multi-agent collaboration in multi-domain reasoning.

Compared with training-free methods, RAG is the strongest single-agent baseline in most settings, especially on knowledge-intensive domains such as \emph{Law, Government, and Public Affairs}, \emph{Ethics and Morality}, and \emph{Social Sciences}. Training-intensive methods exhibit more mixed behavior across heterogeneous domains and are generally weaker than dynamic multi-agent approaches in our evaluation. By contrast, topology-evolution methods provide the strongest baseline results overall, which supports the value of role specialization and structured communication for multi-domain reasoning.

Built on top of these topology-evolution backbones, \texttt{TopoPrior} yields consistent gains across benchmarks and model scales. For example, on MMLU with Llama3-8B-Instruct, \texttt{TopoPrior}+ARG improves the average accuracy from \textbf{65.58} to \textbf{68.53} (\textbf{+2.95} points), and \texttt{TopoPrior}+AD improves AgentDropout from \textbf{65.72} to \textbf{67.31} (\textbf{+1.59} points). On C-Eval with the same backbone, \texttt{TopoPrior}+ARG improves ARG-Designer from \textbf{54.25} to \textbf{57.04} (\textbf{+2.79} points), while \texttt{TopoPrior}+AD improves AgentDropout from \textbf{53.84} to \textbf{56.01} (\textbf{+2.17} points). Similar trends are observed on larger backbones, including DeepSeek-V3-671B-Instruct and Qwen2.5-72B-Instruct.

The improvements are particularly visible in reasoning-intensive and knowledge-intensive categories. On MMLU with Llama3-8B-Instruct, \texttt{TopoPrior}+ARG achieves the best performance in \emph{Ethics and Morality} (\textbf{64.08}) and ranks among the strongest methods in several other domains, while \texttt{TopoPrior}+AD performs best on \emph{Law, Government, and Public Affairs} (\textbf{66.88}) and \emph{Business and Management} (\textbf{69.05}). On C-Eval and C-Eval Hard, the gains are also evident on challenging scientific reasoning tasks. For instance, under Llama3-8B-Instruct, \texttt{TopoPrior}+ARG reaches \textbf{49.33}, \textbf{55.76}, and \textbf{50.89} on Mathematics, Chemistry, and Physics, respectively, outperforming the corresponding backbone without topology-prior initialization.

Among the evaluated backbones, ARG-Designer appears to benefit the most from \texttt{TopoPrior}. This trend is observed across both benchmarks and all three LLM backbones, suggesting that autoregressive graph construction may be particularly sensitive to initialization quality. More broadly, the fact that \texttt{TopoPrior} improves several heterogeneous backbones, rather than only the method used to construct the reference graphs, suggests that it captures reusable collaboration regularities beyond a teacher-specific search pattern.

\begin{table}[!t]
\centering
\caption{Ablation results of \texttt{TopoPrior} on three representative MMLU domains using ARG-Designer.}
\begin{tabular}{lccc}
\toprule
\multicolumn{1}{c}{\multirow{1}{*}{\textbf{Domain}$ \rightarrow$}} & \multirow{2}{*}{\makecell{\textbf{Natural}\\\textbf{Sciences}}} & \multirow{2}{*}{\makecell{\textbf{Law, Gov.}\\\textbf{Public Affairs}}} & \multirow{2}{*}{\makecell{\textbf{Ethics}\\\textbf{Morality}}}  \\
\multicolumn{1}{c}{\multirow{1}{*}{\textbf{Model}$ \downarrow$}} & & & \\
\midrule
\multicolumn{4}{c}{Base model: Llama3-8B-Instruct} \\
\midrule
\rowcolor{mylightgray} \multicolumn{1}{c}{\texttt{TopoPrior}+ARG} & \textbf{68.91} & \textbf{66.15} & \textbf{64.08} \\
\midrule
\multicolumn{1}{c}{w/o $\mathcal{L}_{\mathrm{prior}}$} & 64.71 & 62.92 & 59.81 \\
\multicolumn{1}{c}{w/o $\mathcal{L}_{\mathrm{adapt}}$} & 66.14 & 64.68 & 62.50 \\
\multicolumn{1}{c}{w/o $f_{\mathrm{prior}}$} & 65.31 & 63.70 & 60.25 \\
\midrule
\multicolumn{4}{c}{Base model: Qwen2.5-72B-Instruct} \\
\midrule
\rowcolor{mylightgray} \multicolumn{1}{c}{\texttt{TopoPrior}+ARG} & \textbf{90.87} & \textbf{86.76} & \textbf{86.01} \\
\midrule
\multicolumn{1}{c}{w/o $\mathcal{L}_{\mathrm{prior}}$} & 85.15 & 83.92 & 83.26 \\
\multicolumn{1}{c}{w/o $\mathcal{L}_{\mathrm{adapt}}$} & 87.83 & 85.57 & 85.13 \\
\multicolumn{1}{c}{w/o $f_{\mathrm{prior}}$} & 86.42 & 85.06 & 84.55 \\
\bottomrule
\end{tabular}
\label{ablation_study}
\vspace{-1em}
\end{table}

\subsection{Ablation Study}
Table~\ref{ablation_study} reports ablation results on three representative MMLU domains under two LLM backbones. Using \texttt{TopoPrior}+ARG as the full model, we examine the contributions of transferable topology-prior learning, latent adaptation, and the query-conditioned prior.
Removing $\mathcal{L}_{\mathrm{prior}}$ causes the largest performance drop (up to 4.27 points on Llama3-8B and 5.72 points on Qwen2.5-72B), indicating that topology-prior learning is the central component of \texttt{TopoPrior}. Without this objective, the model is less able to capture reusable structural regularities, leading to weaker graph initialization.
Removing $\mathcal{L}_{\mathrm{adapt}}$ also degrades performance on both backbones, although less severely than removing $\mathcal{L}_{\mathrm{prior}}$. This suggests that latent-space alignment improves the robustness of the learned topology prior across domains by reducing domain discrepancy while still preserving useful query-dependent structural cues.
Removing the query-conditioned prior $f_{\mathrm{prior}}$ further reduces performance on both backbones. This result suggests that the learned prior remains important at inference time: without it, the generator loses an informative query-specific initialization signal and becomes less effective at constructing initial collaboration graphs for downstream refinement.
Overall, the three components are complementary. The topology-prior objective contributes the largest gain, while latent adaptation and the query-conditioned prior provide additional improvements for more stable topology initialization. Their relative ordering is similar across the two backbones, suggesting that these effects are reasonably stable across model scales.

\begin{figure}[!t]
\centering
\includegraphics[width=7cm]{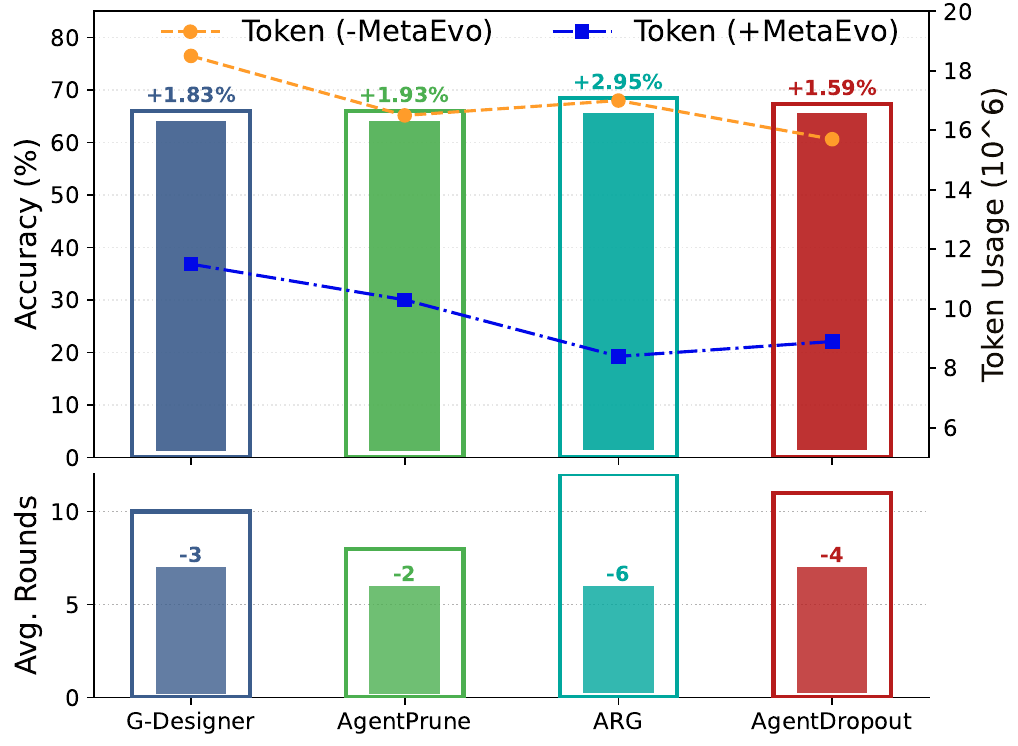}
\caption{Accuracy gain, inference-time token reduction, and communication-round reduction of \texttt{TopoPrior} when combined with different topology-evolution backbones on MMLU using Llama3-8B-Instruct.}
\label{token_cost}
\vspace{-1em}
\end{figure}

\begin{table}[!t]
\caption{Comparison of average accuracy and trainable parameters on MMLU with Llama3-8B-Instruct. Relative parameter increase ($\Delta$) is computed w.r.t. the 8B backbone.}
\centering
\begin{tabular}{lccc}
\toprule
\textbf{Method} & \textbf{Acc. (\%)} & \makecell[c]{\textbf{Trainable Parameters (M)}} & $\Delta$ (\%) \\
\midrule
MoDULA & 56.47 & 1,775 & 22.19 \\
MoDE & 60.12 & 2,376 & 29.70 \\
DES-MoE & 59.10 & 1,585 & 19.81 \\
\texttt{TopoPrior}+ARG & \textbf{68.53} & \textbf{3.3+3.8=7.1} & \textbf{0.09} \\
\bottomrule
\end{tabular}
\label{computational_para}
\vspace{-1em}
\end{table}

\subsection{Efficiency Analysis}
\label{detailed_analysis}

\paragraph{Efficiency--Performance Trade-off.}
We assess the trade-off among performance gain, online token efficiency, and communication-round reduction by equipping \texttt{TopoPrior} with different topology-evolution backbones on MMLU using Llama3-8B-Instruct. Following the protocol described in Section~\ref{implementation_settings}, token consumption is measured over online inference only. We also compare training-intensive baselines to assess adaptation quality relative to trainable parameter cost.

Figure~\ref{token_cost} shows that \texttt{TopoPrior} improves all evaluated topology-evolution methods while reducing online communication cost. The outer box labeled ``Acc.'' and the inner box labeled ``Rounds'' indicate changes in accuracy and communication rounds, respectively. Among the evaluated methods, ARG-Designer appears to benefit the most, achieving the largest average accuracy gain (\(+2.95\) points) together with the largest reduction in communication rounds (6 rounds on average). This observation is consistent with the main results and suggests that autoregressive graph construction may be particularly sensitive to initialization quality. By contrast, pruning-based methods such as AgentPrune benefit less from improved initialization, possibly because their downstream refinement process retains more redundant agents and edges.

Table~\ref{computational_para} further summarizes adaptation performance and parameter cost. \texttt{TopoPrior}+ARG achieves the best average accuracy (68.53\%) while introducing only 3.3M additional parameters in \texttt{TopoPrior} itself; the total trainable parameter count of \texttt{TopoPrior}+ARG is 7.1M, including 3.8M parameters from the original ARG-Designer. This increase is small relative to the 8B backbone. By contrast, training-intensive baselines require substantially more trainable parameters while achieving lower average accuracy on this benchmark. These results suggest that \texttt{TopoPrior} provides a favorable trade-off between adaptation quality and parameter efficiency under the evaluated setting. We note, however, that these efficiency results characterize online inference savings and parameter overhead, rather than the full amortized cost including offline reference-graph construction and prior training.

\begin{table}[!t]
\centering
\caption{Performance of \texttt{TopoPrior} under weak and strong topology supervision.}
\label{tab_weak_teacher}
\begin{tabular}{lc}
\toprule
\textbf{Setting} & \textbf{Accuracy (\%)} \\
\midrule
Weak Teacher (50\% convergence) & 61.24 \\
\texttt{TopoPrior} + Weak Teacher & 64.33 \\
Strong Teacher (full convergence) & 65.58 \\
\texttt{TopoPrior} + Strong Teacher & 68.53 \\
\bottomrule
\end{tabular}
\vspace{-1em}
\end{table}

\begin{table}[!t]
\centering
\caption{Effect of different supervision sources on model performance and inference-time token efficiency. Token reduction is measured against the ARG-Designer baseline.}
\label{tab_supervision_alternatives}
\begin{tabular}{lcc}
\toprule
\textbf{Supervision Source} & \textbf{Accuracy (\%)} & \makecell[c]{\textbf{Token} \\ \textbf{Reduction (\%)}} \\
\midrule
ARG-Designer & 65.58 & -- \\
+ \textit{Full} & 68.53 & 40.2 \\
+ \textit{Cheap-early} & 64.33 & 32.7 \\
+ \textit{Static-template} & 62.14 & 18.3 \\
+ \textit{Random} & 60.25 & 20.5 \\
\bottomrule
\end{tabular}
\vspace{-1em}
\end{table}

\paragraph{Weak Topology Supervision.}
We further examine whether \texttt{TopoPrior} remains effective under degraded and simplified supervision. Table~\ref{tab_weak_teacher} compares strong teacher graphs obtained from fully converged AgentDropout with weak teacher graphs obtained by stopping AgentDropout after 50\% of its convergence rounds. Even under weak teacher supervision, \texttt{TopoPrior} reaches 64.33\% accuracy, improving over the weak teacher itself by 3.09 points and remaining only 1.25 points below the fully supervised setting. This result suggests that \texttt{TopoPrior} can still extract useful collaboration patterns from imperfect topology supervision.

We further compare alternative supervision sources in Table~\ref{tab_supervision_alternatives}, including fully converged graphs (\textit{Full}), partially converged graphs (\textit{Cheap-early}), domain-heuristic static templates (\textit{Static-template}), and random graphs. All variants are used to train \texttt{TopoPrior}, and the resulting generator is evaluated by initializing ARG-Designer. The results show that \texttt{TopoPrior} remains effective even when supervision is simplified: partially converged graphs already provide clear improvements, while static templates still yield moderate gains. By contrast, random graphs do not produce competitive results, suggesting that \texttt{TopoPrior} learns meaningful collaboration patterns rather than merely fitting superficial graph statistics.

\begin{figure}[!t]
\centering
\includegraphics[width=6cm]{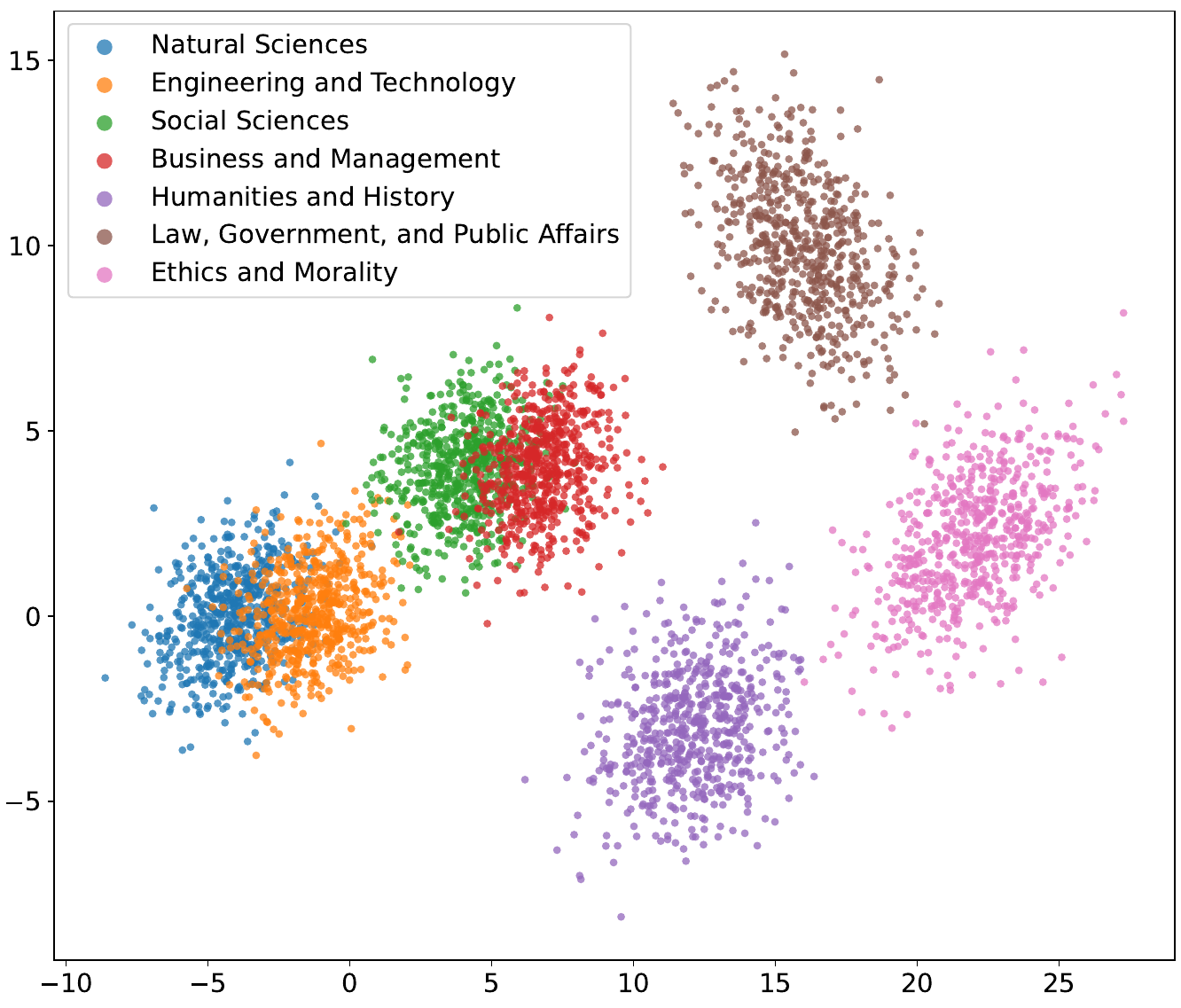}
\caption{t-SNE visualization of the encoder-induced latent space across the seven MMLU domains.}
\label{multi_domain_tsne}
\vspace{-1em}
\end{figure}

\begin{figure}[!t]
\centering
\includegraphics[width=9cm]{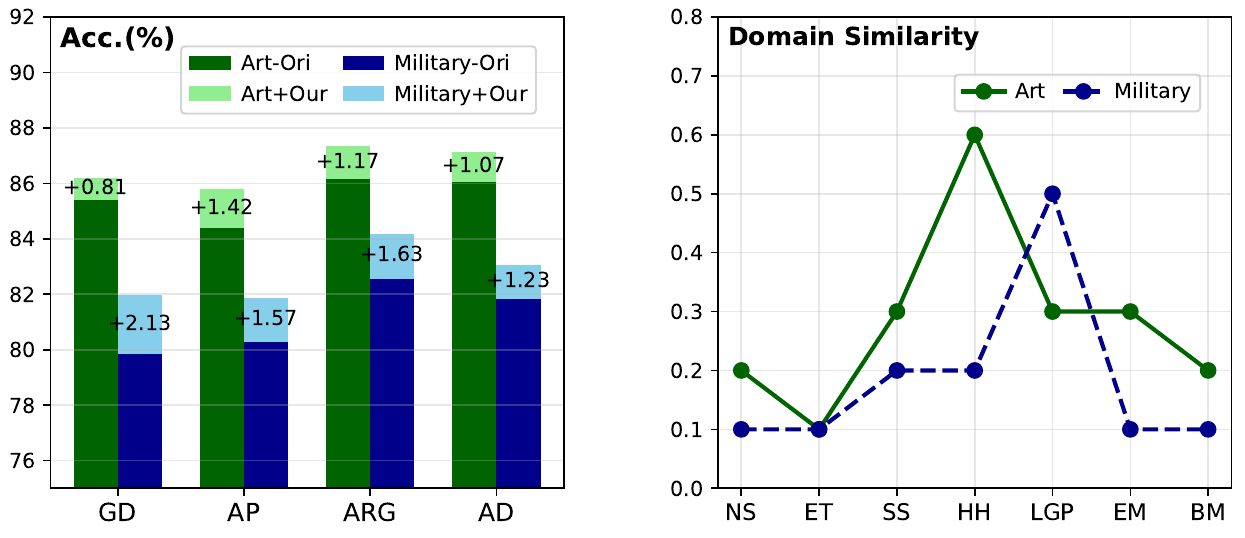}
\caption{Out-of-domain generalization on unseen domains. ``GD'', ``AP'', ``ARG'', and ``AD'' denote G-Designer, AgentPrune, ARG-Designer, and AgentDropout, respectively.}
\label{ood_domain}
\vspace{-1em}
\end{figure}

\subsection{Representation and Generalization Analysis}

\paragraph{Multi-Domain Latent Structure.}
To assess whether \texttt{TopoPrior} learns latent representations that are both reusable and domain-sensitive, we visualize the latent variable $z$ for all seven major MMLU domains. For each domain, we sample 400 examples, encode them into the latent space using the learned encoder, and project the resulting 128-dimensional representations to two dimensions using PCA followed by t-SNE~\cite{tsne_bib}. Figure~\ref{multi_domain_tsne} shows that the seven domains form several coherent clusters in the latent space. In particular, \emph{Natural Sciences} and \emph{Engineering \& Technology} exhibit partial overlap, likely due to shared quantitative and formal reasoning patterns, while \emph{Social Sciences} and \emph{Business \& Management} also overlap to some extent because of related economic and decision-oriented concepts. Other domains remain relatively distinct. Although this visualization is qualitative, it is consistent with the intended behavior of \texttt{TopoPrior}: the latent space captures reusable structural regularities while retaining domain-related variation that may be useful for topology initialization.

\paragraph{Out-of-Domain Generalization.}
To evaluate generalization beyond the training domains, we conduct experiments on two unseen domains that are not included in MMLU, namely Art~\cite{DBLP:conf/eccv/GarciaV18} and Military~\cite{DBLP:conf/coling/Zhu0ZXWKH24}, using Llama3-8B-Instruct. For a new query $q_{\mathrm{new}}$, we obtain a latent representation $z_{\mathrm{new}}\sim p'_\theta(z\mid q_{\mathrm{new}})$ from the learned prior network and compute its cosine similarity to the centroids of the seven in-domain latent clusters identified in Fig.~\ref{multi_domain_tsne}. Figure~\ref{ood_domain} shows that the learned prior maps unseen queries to semantically plausible regions of the latent space by associating them with related in-domain clusters, such as placing Art closer to \emph{Humanities \& History}. This behavior suggests that the topology initializer can produce plausible collaboration graphs even for unseen domains, which is consistent with the observed gains in out-of-domain performance.

\begin{table*}[!t]
\centering
\caption{Low-resource training results over MMLU of \texttt{TopoPrior} + ARG-Designer in terms of accuracy (\%).}
\begin{tabular}{cccccccccc}
\toprule
\multirow{2}{*}{\textbf{Domain}} & 
\multicolumn{2}{c}{\textbf{With \texttt{TopoPrior}?}} & 
\multirow{2}{*}{$\Delta$} & 
\multicolumn{2}{c}{\textbf{With \texttt{TopoPrior}?}} & 
\multirow{2}{*}{$\Delta$} & 
\multicolumn{2}{c}{\textbf{With \texttt{TopoPrior}?}} & 
\multirow{2}{*}{$\Delta$} \\
\cline{2-3} \cline{5-6} \cline{8-9}
& \textbf{No} & \textbf{Yes} & & \textbf{No} & \textbf{Yes} & & \textbf{No} & \textbf{Yes} & \\ \midrule
\multicolumn{1}{c}{\multirow{1}{*}{\textbf{Training data}}} & 
\multicolumn{3}{c}{\textbf{5\% of the original}} & 
\multicolumn{3}{c}{\textbf{10\% of the original}} & 
\multicolumn{3}{c}{\textbf{20\% of the original}} \\ \midrule
Natural Science & 60.34 & 61.95 & \textbf{1.61$\uparrow$} & 61.89 & 62.75 & \textbf{0.86$\uparrow$} & 62.64 & 64.96 & \textbf{2.32$\uparrow$} \\
Engineering \& Technology & 63.46 & 64.80 & \textbf{1.34$\uparrow$} & 64.72 & 66.05 & \textbf{1.33$\uparrow$} & 65.94 & 67.58 & \textbf{1.64$\uparrow$} \\
Social Sciences & 58.25 & 60.86 & \textbf{2.61$\uparrow$} & 60.13 & 63.31 & \textbf{3.18$\uparrow$} & 61.08 & 65.42 & \textbf{4.34$\uparrow$} \\
Humanities \& History & 62.88 & 63.92 & \textbf{1.04$\uparrow$} & 64.71 & 65.83 & \textbf{1.12$\uparrow$} & 66.15 & 67.89 & \textbf{1.74$\uparrow$} \\
Law, Government, Public Affairs & 55.13 & 57.64 & \textbf{2.51$\uparrow$} & 58.55 & 60.07 & \textbf{1.52$\uparrow$} & 60.41 & 63.32 & \textbf{2.91$\uparrow$} \\
Ethics \& Morality & 53.66 & 56.24 & \textbf{2.58$\uparrow$} & 56.85 & 58.70 & \textbf{1.85$\uparrow$} & 59.19 & 60.61 & \textbf{1.42$\uparrow$} \\
Business \& Management & 59.40 & 61.88 & \textbf{2.48$\uparrow$} & 61.23 & 63.65 & \textbf{2.42$\uparrow$} & 62.90 & 65.77 & \textbf{2.87$\uparrow$} \\
\midrule
\textbf{Average} & 59.02 & 61.04 & \textbf{2.02$\uparrow$} & 61.15 & 62.91 & \textbf{1.76$\uparrow$} & 62.62 & 65.08 & \textbf{2.46$\uparrow$} \\
\bottomrule
\end{tabular}
\label{few_shot}
\vspace{-1em}
\end{table*}

\subsection{Low-Resource Learning Analysis}
\label{low_resource}
To evaluate the robustness of \texttt{TopoPrior} in data-scarce domains, we conduct a low-resource analysis to assess whether \texttt{TopoPrior} can effectively leverage limited domain data. We compare the results of ARG-Designer with and without \texttt{TopoPrior} under several low-resource regimes on MMLU. Specifically, we subsample the original training data to 5\%, 10\%, and 20\% of the full size in each domain. The \texttt{TopoPrior} generator is trained using the same hyperparameters as in the full-data setting and subsequently evaluated on the corresponding test splits.

Table~\ref{few_shot} shows that \texttt{TopoPrior} improves the results of ARG-Designer across all data fractions, suggesting that it can transfer useful topological priors even under limited supervision. At the lowest data setting (5\%), incorporating \texttt{TopoPrior} yields particularly clear gains (e.g., +1.04 points in ``Humanities \& History'' and +2.58 points in ``Ethics \& Morality''), indicating that the learned prior can provide structural guidance when labeled examples are scarce. Despite fluctuations in absolute performance, \texttt{TopoPrior} maintains a positive improvement margin across all training-data sizes. These results suggest that the learned latent space remains useful in low-resource settings.

\subsection{Fixed Agent Pool Domain Generalization}
\label{fixed_pool_sec}

We next discuss why a fixed role pool may still generalize to niche and out-of-distribution (OOD) domains. The key intuition is that semantic similarity between queries and roles, together with the compositional flexibility of collaboration graphs, can enable role reuse beyond the domains explicitly represented in the training partition. We support this perspective with both conceptual discussion and empirical evidence.

As shown in Table~\ref{agent_roles}, the \texttt{TopoPrior} role pool spans broad domain categories (e.g., Natural Sciences, Engineering, and Social Sciences), each comprising multiple subdomains. Niche domains can often be associated with these broader categories. For example, a genetics query falls under Natural Sciences/Biology and can be addressed by both the ``Natural Science Expert'' and ``Medical Life Scientist'' roles. Crucially, \texttt{TopoPrior} does not assign roles statically; rather, it composes multiple roles and their interactions according to the content and context of each query. For a genetics question, the generated graph may include
\textit{(1)} a Natural Science Expert for core biological knowledge,
\textit{(2)} a Medical Life Scientist when the question involves genetic disorders,
\textit{(3)} a Chemistry Specialist for molecular genetics topics such as DNA structure, and
\textit{(4)} a General Coordinator to orchestrate collaboration.
Hence, even without an explicit ``Genetics Expert'' role, a combination of existing roles may still address the query effectively. The learned initializer adapts role composition to the query context, enabling a degree of generalization through semantic coverage.

\begin{table}[!t]
\centering
\caption{Performance on niche domains.}
\small
\setlength{\tabcolsep}{3pt}
\renewcommand{\arraystretch}{1.1}
\begin{tabular}{lccccc}
\toprule
\textbf{Model} & \multicolumn{2}{c}{\textbf{Astro-QA}} & \multicolumn{2}{c}{\textbf{TREC Genomics}} \\
\cmidrule(lr){2-3} \cmidrule(lr){4-5}
     & Acc. (\%) & Oracle Gap & Acc. (\%) & Oracle Gap \\
\midrule
ARG-Designer       & 71.4  & --3.7   & 61.8 & --3.3 \\
AutoGen            & 68.2  & --5.9   & 60.6 & --4.5 \\
Oracle             & 74.1  & --      & 65.1 & -- \\
\texttt{TopoPrior}+ARG       & \textbf{73.5} & \textbf{--0.6} & \textbf{64.3} & \textbf{--0.8} \\
\bottomrule
\end{tabular}
\label{niche_domains}
\vspace{-1em}
\end{table}

\subsubsection{Domain Validation of Generalization}
To further examine whether a fixed role pool can remain useful in niche domains, we construct two niche-domain test sets: Astro-QA~\cite{astro_qa} for astrophysics and TREC Genomics~\cite{Hersh2007TREC2G} for genetics. These domains are not explicitly represented in the original role pool. Specifically, astrophysics can be viewed as a subdomain of Natural Sciences/Physics, while genetics falls under Natural Sciences/Biology/Medicine. We compare three configurations: \textit{(1)} ARG-Designer and AutoGen~\cite{wu2024autogen}, which dynamically generate roles from scratch; \textit{(2)} \texttt{TopoPrior}+ARG, which uses the fixed role pool; and \textit{(3)} Oracle, in which a dedicated domain-specific role is added to the pool as a reference upper bound.
Table~\ref{niche_domains} shows that \texttt{TopoPrior}+ARG performs within 1\% of the Oracle, suggesting that semantic role composition can provide useful coverage for niche domains. By contrast, the dynamic role-generation baselines (ARG-Designer and AutoGen) perform worse in this setting, suggesting that reusing semantically related roles from a fixed pool may be more effective than generating roles from scratch.

\begin{table}[!t]
\caption{Token cost analysis for training and inference.}
\small
\setlength{\tabcolsep}{4pt}
\renewcommand{\arraystretch}{1.15}
\begin{tabular}{l r}
\toprule
\textbf{Metric} & \textbf{Value} \\
\midrule
Avg. tokens per training graph (ARG) & $\sim 1{,}200$ \\
Total training tokens ($\sim 100$k samples) & $\sim 120$M \\
\makecell[l]{Avg. inference tokens per test query \\ (ARG alone)} & $\sim 800$ \\
\makecell[l]{Avg. inference tokens per test query \\ (\texttt{TopoPrior}+ARG)} & $\sim 478$ \\
Token savings per test query & $322$ \\
Total inference savings (MMLU test set) & $4.83$M \\
Break-even test queries needed & $120$M $/$ $322 \approx 373{,}670$ \\
\bottomrule
\end{tabular}
\label{token_breakdown}
\flushleft
\vspace{-1em}
\end{table}

\begin{figure*}
\centering
\includegraphics[width=16cm]{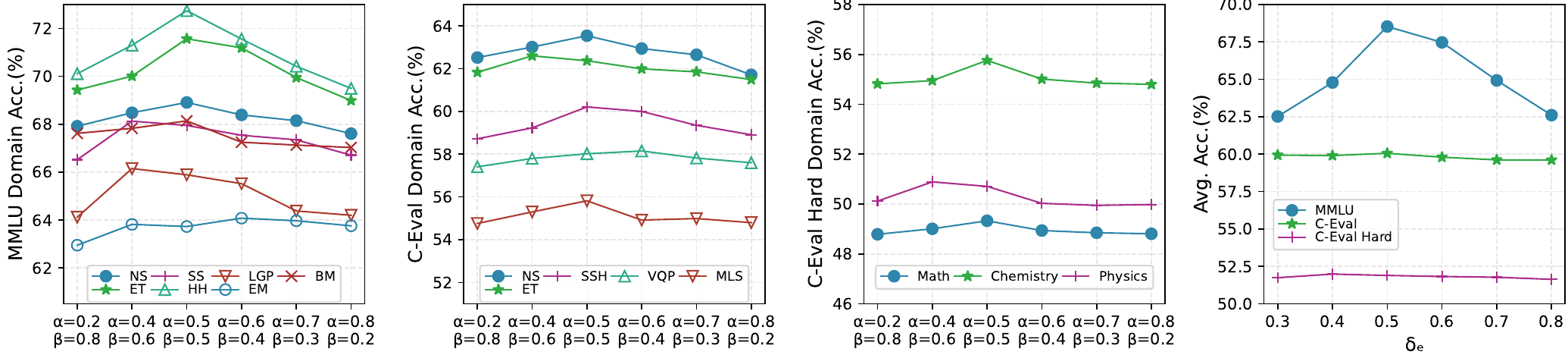}
\caption{Hyperparameter analysis of the loss coefficients $\alpha$, $\beta$, and edge generation threshold $\delta_e$. ``NS'', ``ET'', ``SS'', ``HH'', ``LGP'', ``EM'' and ``BM'' are abbreviations for different domains in MMLU and C-Eval, respectively.}
\label{hyper}
\vspace{-1em}
\end{figure*}

\subsection{Analysis of End-to-End Token Cost}
The one-time cost of generating reference collaboration graphs for training \texttt{TopoPrior} may be amortized by token savings during inference, which can make the overall pipeline cost-effective at sufficiently large deployment scales.
Reference graphs are generated only once during offline training. After training, the \texttt{TopoPrior} generator can produce initial collaboration topologies for any number of test queries without re-running the expensive topology-evolution algorithm. For any domain, the supervision cost scales with the training set size, whereas inference benefits from a \textbf{40.2\%} token reduction (see Section~\ref{detailed_analysis}) per test query. Token savings therefore accumulate as more test queries are processed and may eventually offset the initial offline cost.

We measure the total training token cost for graph generation on the MMLU training set (seven domains, $\sim$100k samples) using ARG-Designer, and compute total inference savings using \texttt{TopoPrior}+ARG on the MMLU test set (15k samples), extrapolating to larger test sets. As shown in Table~\ref{token_breakdown}, the MMLU test set alone does not recoup the training cost (4.83M saved tokens versus 120M tokens invested). However, at larger deployment scales, the break-even point is reached after approximately \textbf{373,670} queries. Once \texttt{TopoPrior} is trained, it can be reused across domains and datasets without additional graph generation, which further improves amortization in repeated-use scenarios.

\subsection{Hyperparameter Analysis}
\label{hyperparameters_analysis}
We analyze key hyperparameters, including the loss coefficients $\alpha$ and $\beta$, as well as the edge-generation threshold $\delta_e$, across multiple dataset subdomains using Llama3-8B-Instruct. Figure~\ref{hyper} reports the average performance over these settings. The best hyperparameter combination is used for the main experimental results.

\section{Conclusion}
\label{sec_conclusion}

In this paper, we presented \texttt{TopoPrior}, a framework for transferable topology prior learning for multi-agent LLM collaboration in multi-domain settings. Rather than constructing collaboration graphs from scratch for each task, \texttt{TopoPrior} learns reusable, query-conditioned topology priors from reference collaboration graphs and uses them to initialize downstream topology-evolution backbones. Experiments on MMLU, C-Eval, and additional out-of-domain settings show that \texttt{TopoPrior} improves downstream reasoning performance in the evaluated settings while reducing online communication cost, token usage, and refinement rounds. These results suggest that transferable topology-prior learning is a practical direction for improving the efficiency and adaptability of multi-agent LLM systems.

Several directions remain for future work. One is to reduce reliance on expensive offline supervision by using cheaper or more weakly supervised reference graphs. Another is to extend the current predefined role pool toward more open-ended role discovery and role composition. It is also promising to investigate tighter integration between topology-prior learning and downstream topology refinement in interactive reasoning settings. From a broader-impact perspective, \texttt{TopoPrior} may improve the efficiency and accessibility of multi-agent LLM systems by reducing communication overhead and improving collaboration quality. At the same time, such systems should be deployed with appropriate evaluation, transparency, and human oversight, especially in high-stakes applications.

\bibliographystyle{IEEEtran}  
\bibliography{reference}  

\newpage
\appendices
\section{Proofs of Theoretical Results}
\label{proof_theory}

In this appendix, we provide proofs and supporting statements for the analytical results in Section~\ref{sec:theory}. The analysis formalizes two core intuitions underlying \texttt{TopoPrior}: (i) latent-space alignment can support cross-domain generalization, and (ii) improved topology initialization can reduce the number of refinement rounds and the associated token cost.

Let $\mathcal{G}$ denote the space of directed labeled graphs corresponding to agent collaboration topologies. For any query $q$, domain label $d$, and ground-truth answer $y$, let $p^*(\cdot \mid q, d)$ denote an unknown high-utility distribution over collaboration graphs for that query and domain. Our learned generator is denoted by $p_\theta(\cdot \mid q)$. When executed with a graph $G$, the downstream multi-agent system produces a random output $\hat{y}$ with distribution $\pi_{\mathrm{MAS}}(\cdot \mid q, G)$. The task loss $\ell(\hat{y}, y) \in [0,1]$ measures the discrepancy between the system output and the ground truth.

We define the \emph{accuracy--token utility} as
\[
J_\lambda(G; q, y)
=
1
-
\mathbb{E}_{\hat{y}\sim\pi_{\mathrm{MAS}}(\cdot \mid q, G)}[\ell(\hat{y}, y)]
-
\lambda \widetilde{C}(G; q),
\]
where $\widetilde{C}(G; q) = C(G; q)/C_{\max}$ is the normalized token cost and $\lambda \ge 0$ is a trade-off coefficient between correctness and communication cost. Since $\ell(\hat{y}, y)\in[0,1]$ and $\widetilde{C}(G; q)\ge 0$, the utility $J_\lambda$ takes values in $[-\lambda,1]$.

\subsection{Proofs for Section~\ref{subsec:multidomain}: Cross-Domain Transfer via Latent Alignment}
\label{app:multidomain}

Let $Z$ be the latent space induced by the variational encoder. For each source domain $k \in \{1,\dots,K\}$, define the marginal latent distribution
\[
P_k(z)
=
\mathbb{E}_{(q,y)\sim\mathcal{D}_k}
\;
\mathbb{E}_{G^* \sim p^*(\cdot \mid q,k)}
\bigl[q_\phi(z \mid q, G^*)\bigr],
\]
where $q_\phi(z \mid q, G)$ denotes the encoder distribution. For a target domain $t$, we analogously define $P_t(z)$.

We consider a hypothesis class $\mathcal{H}$ of functions $h: Z \to \mathcal{Y}$. For a hypothesis $h$, the expected error on domain $d$ is
\[
\epsilon_d(h) = \mathbb{E}_{z \sim P_d}[\ell_h(z)],
\]
where $\ell_h$ is a bounded loss.

To quantify discrepancy between latent distributions, we use the $\mathcal{H}\Delta\mathcal{H}$-divergence.

\begin{definition}[$\mathcal{H}\Delta\mathcal{H}$-divergence]
\label{def:hdiv}
For two distributions $P$ and $Q$ on $Z$,
\begin{multline*}
d_{\mathcal{H}\Delta\mathcal{H}}(P,Q)
=
2\sup_{h,h'\in\mathcal{H}}
\Bigl|
\Pr_{z\sim P}[h(z)\neq h'(z)] \\
- \Pr_{z\sim Q}[h(z)\neq h'(z)]
\Bigr|.
\end{multline*}
\end{definition}

This divergence measures the largest change in pairwise hypothesis disagreement between the two distributions.

\begin{proof}[Proof of Theorem~\ref{thm:multi_domain}]
Let $\alpha = (\alpha_1,\dots,\alpha_K)$ be convex weights such that $\alpha_k \ge 0$ and $\sum_{k=1}^K \alpha_k = 1$. Define the source-mixture distribution
\[
P_\alpha = \sum_{k=1}^K \alpha_k P_k
\]
and the corresponding weighted source error
\[
\epsilon_\alpha(h) = \sum_{k=1}^K \alpha_k \epsilon_k(h).
\]

Let
\[
h^* \in \arg\min_{h\in\mathcal{H}} \bigl(\epsilon_\alpha(h) + \epsilon_t(h)\bigr),
\]
and define
\[
\lambda_{\alpha,t}^* = \epsilon_\alpha(h^*) + \epsilon_t(h^*).
\]

Starting from
\[
\epsilon_t(h)
=
\epsilon_\alpha(h)
+
\bigl(\epsilon_t(h)-\epsilon_t(h^*)\bigr)
+
\bigl(\epsilon_t(h^*)+\epsilon_\alpha(h^*)\bigr)
+
\bigl(\epsilon_\alpha(h^*)-\epsilon_\alpha(h)\bigr),
\]
we obtain
\[
\epsilon_t(h)
\le
\epsilon_\alpha(h)
+
\left|
\epsilon_t(h)-\epsilon_t(h^*)
-
\bigl(\epsilon_\alpha(h)-\epsilon_\alpha(h^*)\bigr)
\right|
+
\lambda_{\alpha,t}^*.
\]

Using the standard multi-source domain adaptation bound based on $\mathcal{H}\Delta\mathcal{H}$, the absolute term is bounded by
\[
\frac{1}{2}d_{\mathcal{H}\Delta\mathcal{H}}(P_\alpha,P_t).
\]
Therefore,
\[
\epsilon_t(h)
\le
\epsilon_\alpha(h)
+
\frac{1}{2}d_{\mathcal{H}\Delta\mathcal{H}}(P_\alpha,P_t)
+
\lambda_{\alpha,t}^*.
\]
Substituting $\epsilon_\alpha(h)=\sum_{k=1}^K \alpha_k \epsilon_k(h)$ concludes the proof.
\end{proof}

\paragraph{Interpretation.}
Theorem~\ref{thm:multi_domain} bounds the target-domain error by three terms: the weighted source-domain error, the latent-space divergence between the source mixture and the target domain, and the best joint error achievable by the hypothesis class. In \texttt{TopoPrior}, the adversarial regularizer is intended to reduce latent-space domain discrimination, thereby lowering divergence and supporting transfer to unseen domains.

\subsection{Proofs for Section~\ref{subsec:acceleration}: Topology Initialization as Search Acceleration}
\label{app:acceleration}

We now analyze how improved initialization can reduce refinement rounds and token cost. Let
\[
U_t = \mathbb{E}[J_\lambda(G_t; q, y)]
\]
be the expected utility at evolution step $t$, starting from $G_0$, and let $U^*$ denote the optimal utility achievable by the evolution procedure.

\begin{proof}[Proof of Theorem~\ref{thm:rounds}]
Under Assumption~\ref{asm:contraction}, for all $t$,
\[
U^* - U_{t+1} \le (1-\eta)(U^* - U_t).
\]
Recursively applying this inequality for $T$ steps gives
\[
U^* - U_T \le (1-\eta)^T (U^* - U_0).
\]
To ensure $U^* - U_T \le \epsilon$, it suffices to require
\[
(1-\eta)^T (U^* - U_0) \le \epsilon,
\]
which is equivalent to
\[
T \ge \frac{\log\bigl((U^* - U_0)/\epsilon\bigr)}{\log\bigl(1/(1-\eta)\bigr)},
\]
since $0<1-\eta<1$. This completes the proof.
\end{proof}

\begin{proof}[Proof of Corollary~\ref{cor:rounds_reduction}]
Applying Theorem~\ref{thm:rounds} to prior-based and scratch initialization gives
\[
T_{\mathrm{prior}}(\epsilon)
=
\frac{\log\bigl((U^* - U_0^{\mathrm{prior}})/\epsilon\bigr)}
{\log(1/(1-\eta))},
\]
\[
T_{\mathrm{scratch}}(\epsilon)
=
\frac{\log\bigl((U^* - U_0^{\mathrm{scratch}})/\epsilon\bigr)}
{\log(1/(1-\eta))}.
\]
Subtracting yields
\[
T_{\mathrm{prior}}(\epsilon) - T_{\mathrm{scratch}}(\epsilon)
=
\frac{\log\!\left(\frac{U^* - U_0^{\mathrm{prior}}}{U^* - U_0^{\mathrm{scratch}}}\right)}
{\log(1/(1-\eta))}.
\]
Since $U_0^{\mathrm{prior}} > U_0^{\mathrm{scratch}}$, the numerator is negative and the denominator is positive, so $T_{\mathrm{prior}}(\epsilon) < T_{\mathrm{scratch}}(\epsilon)$.
\end{proof}

Next, we formalize token-cost savings from fewer refinement rounds and sparser initial graphs.

\begin{assumption}[Token cost bounded by graph size]
\label{asm:token_size}
There exist constants $a_V, a_E, b \ge 0$ such that for any graph $G$ and query $q$,
\[
C(G; q) \le a_V |V(G)| + a_E |E(G)| + b.
\]
\end{assumption}

\begin{assumption}[Bounded graph size during evolution]
\label{asm:graph_size}
For all steps $t$ before termination,
\[
\mathbb{E}[|V(G_t)| + |E(G_t)|] \le M
\]
for some constant $M \ge 1$.
\end{assumption}

\begin{proposition}[Total expected token cost over $T$ rounds]
\label{prop:token_total}
Under Assumptions~\ref{asm:token_size} and~\ref{asm:graph_size},
\[
\mathbb{E}\!\left[\sum_{t=0}^{T-1} C(G_t; q)\right]
\le
T\bigl((a_V+a_E)M + b\bigr).
\]
\end{proposition}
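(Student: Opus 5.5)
The argument is a direct per-round bound followed by summation via linearity of expectation. Here $T$ is a fixed (deterministic) number of rounds, for instance the round budget from Theorem~\ref{thm:rounds}, so the sum has a non-random number of terms.

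\emph{Step 1 (per-round bound).} Apply Assumption~\ref{asm:token_size} pointwise to each realized graph $G_t$:
\[
C(G_t;q)\le a_V|V(G_t)|+a_E|E(G_t)|+b.
\]
Since $a_V,a_E\ge 0$, we can loosen both coefficients to their sum:
\[
a_V|V(G_t)|+a_E|E(G_t)| \le (a_V+a_E)\bigl(|V(G_t)|+|E(G_t)|\bigr).
\]
This holds because each summand gains a nonnegative term, $a_V|E(G_t)|$ and $a_E|V(G_t)|$ respectively. Taking expectations, which preserves the inequality by monotonicity, and invoking Assumption~\ref{asm:graph_size} gives
\[
\mathbb{E}[C(G_t;q)]\le (a_V+a_E)M+b \quad\text{for every } t\le T-1.
\]

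\emph{Step 2 (summation).} By linearity of expectation,
\[
\mathbb{E}\Bigl[\sum_{t=0}^{T-1}C(G_t;q)\Bigr]=\sum_{t=0}^{T-1}\mathbb{E}[C(G_t;q)].
\]
Summing the $T$ identical bounds from Step~1 yields the claim.

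\emph{Main obstacle.} The only delicate point is making sure every step $t\in\{0,\dots,T-1\}$ is covered by Assumption~\ref{asm:graph_size}, which is stated only "before termination." I would handle this by interpreting $T$ as at most the termination step. Alternatively, I would adopt the convention that $C=0$ after termination; then each post-termination term is trivially bounded by $(a_V+a_E)M+b\ge 0$.

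If $T$ were a random stopping time, linearity over a random number of terms would instead require a Wald-type argument. In that case I would write the sum as $\sum_{t\ge0}C(G_t;q)\mathbf{1}\{t<T\}$ and obtain the bound with $\mathbb{E}[T]$ in place of $T$. For the deterministic $T$ in the statement, this is unnecessary. Nonnegativity of $C$ is not needed in the deterministic case, since the expectation bounds apply termwise.
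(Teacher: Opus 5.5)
Your proof is correct and matches the paper's argument: both bound each $\mathbb{E}[C(G_t;q)]$ by $(a_V+a_E)M+b$ using the two assumptions and then sum over $t=0,\dots,T-1$. Your explicit loosening $a_V|V(G_t)|+a_E|E(G_t)|\le(a_V+a_E)\bigl(|V(G_t)|+|E(G_t)|\bigr)$ justifies a step the paper leaves implicit, since the paper bounds $a_V\mathbb{E}[|V(G_t)|]+a_E\mathbb{E}[|E(G_t)|]$ by $(a_V+a_E)M$ directly, and your treatment of the ``before termination'' caveat clarifies something the paper does not address.
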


\begin{proof}
From Assumption~\ref{asm:token_size}, we have
\[
C(G_t; q) \le a_V |V(G_t)| + a_E |E(G_t)| + b.
\]
Taking expectations on both sides and using Assumption~\ref{asm:graph_size}, we obtain
\[
\mathbb{E}[C(G_t; q)]
\le
a_V \mathbb{E}[|V(G_t)|] + a_E \mathbb{E}[|E(G_t)|] + b
\le
(a_V+a_E)M + b.
\]
Summing over $t=0,\dots,T-1$ establishes the result.
\end{proof}

\begin{corollary}[Token savings from fewer rounds]
\label{cor:token_savings}
If \texttt{TopoPrior} reduces the number of rounds from $T_{\mathrm{scratch}}$ to $T_{\mathrm{prior}}$, then the reduction in the bound of Proposition~\ref{prop:token_total} is
\[
\bigl(T_{\mathrm{scratch}} - T_{\mathrm{prior}}\bigr)\bigl((a_V+a_E)M + b\bigr).
\]
\end{corollary}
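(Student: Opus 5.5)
The plan is to evaluate the right-hand side of Proposition~\ref{prop:token_total} at the two round counts and subtract. Write $B(T)=T\bigl((a_V+a_E)M+b\bigr)$ for that upper bound on the expected cumulative token cost of $T$ rounds. Assumptions~\ref{asm:token_size} and~\ref{asm:graph_size} do not refer to how $G_0$ was produced, so the proposition holds equally for the evolution started from scratch and for the one started from the \texttt{TopoPrior} initialization. This is the one point to check. The constants $a_V,a_E,b,M$ must be the same in both runs. I will take them as uniform constants, which is how the assumptions are worded: ``for any graph $G$'' and ``for all steps $t$''. If they differed, one would replace $M$ by the maximum of the two, or note that a sparser prior graph can only lower $M$, which only helps.

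First, apply Proposition~\ref{prop:token_total} with $T=T_{\mathrm{scratch}}$ and with $T=T_{\mathrm{prior}}$. This gives $B(T_{\mathrm{scratch}})$ and $B(T_{\mathrm{prior}})$. Second, use the fact that $B$ is linear in $T$ with slope $(a_V+a_E)M+b$. Then
\[
B(T_{\mathrm{scratch}})-B(T_{\mathrm{prior}})=\bigl(T_{\mathrm{scratch}}-T_{\mathrm{prior}}\bigr)\bigl((a_V+a_E)M+b\bigr),
\]
which is exactly the claimed reduction.

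Third, for interpretation, combine this with Corollary~\ref{cor:rounds_reduction}. When $U_0^{\mathrm{prior}}>U_0^{\mathrm{scratch}}$, that corollary gives $T_{\mathrm{prior}}<T_{\mathrm{scratch}}$. Since $a_V,a_E,b\ge 0$ and $M\ge 1$, the slope is nonnegative, so the reduction is nonnegative. It is strictly positive whenever $(a_V+a_E)M+b>0$.

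There is no real technical obstacle here; it is an algebraic identity. The only caveat is the one above: the statement concerns the reduction in the \emph{bound}, not in the actual expected cost. The proof should say this explicitly rather than claim that the true cost decreases by the same amount.
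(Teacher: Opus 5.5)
Your proposal is correct and matches the paper's argument, which the paper leaves implicit: it states this corollary without a separate proof, as an immediate consequence of instantiating Proposition~\ref{prop:token_total} at $T_{\mathrm{scratch}}$ and $T_{\mathrm{prior}}$ and subtracting, since the bound is linear in $T$. You also read the statement correctly on two points: the constants $a_V,a_E,b,M$ are uniform across both runs, and the claimed saving is a reduction in the upper bound, not in the actual expected cost.
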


\begin{assumption}[Token cost monotonicity in graph size]
\label{asm:token_mono}
For fixed $q$, $C(G; q)$ is non-decreasing in both $|V(G)|$ and $|E(G)|$.
\end{assumption}

\begin{proposition}[Sparser initialization reduces first-round cost]
\label{prop:sparsity}
Under Assumption~\ref{asm:token_mono}, if the prior-initialized graph $G_0^{\mathrm{prior}}$ satisfies
\[
\mathbb{E}[|V(G_0^{\mathrm{prior}})|] \le \mathbb{E}[|V(G_0^{\mathrm{scratch}})|],
\]
\[
\mathbb{E}[|E(G_0^{\mathrm{prior}})|] \le \mathbb{E}[|E(G_0^{\mathrm{scratch}})|],
\]
then
\[
\mathbb{E}[C(G_0^{\mathrm{prior}}; q)]
\le
\mathbb{E}[C(G_0^{\mathrm{scratch}}; q)].
\]
\end{proposition}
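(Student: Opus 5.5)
The plan is to reduce the comparison of expected costs to a pointwise comparison of costs, and then integrate. Assumption~\ref{asm:token_mono} only says that $C(G;q)$ is non-decreasing in $|V(G)|$ and $|E(G)|$ for fixed $q$. So I would first make explicit what it implicitly requires: $C(\cdot;q)$ depends on $G$ only through the pair $(|V(G)|,|E(G)|)$, or at least is dominated along that pair. Write $C(G;q)=\varphi_q(|V(G)|,|E(G)|)$ with $\varphi_q$ coordinatewise non-decreasing. The query $q$ is held fixed throughout, so the expectations are over the randomness of the two initializers only.

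The key step is to pass from the hypotheses on expectations to a statement about $\varphi_q$. I would treat two cases.

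\emph{Case 1: $\varphi_q$ is affine,} $\varphi_q(n,m)=a_V n+a_E m+b$ with $a_V,a_E\ge 0$. This is the form suggested by Assumption~\ref{asm:token_size} when that bound holds with equality. Linearity of expectation then gives
\[
\mathbb{E}[C(G_0^{\mathrm{prior}};q)]
=a_V\mathbb{E}[|V(G_0^{\mathrm{prior}})|]+a_E\mathbb{E}[|E(G_0^{\mathrm{prior}})|]+b .
\]
The same identity holds for the scratch initializer. The two hypothesised inequalities on expected sizes, multiplied by the nonnegative coefficients $a_V,a_E$, give the claim directly.

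\emph{Case 2: $\varphi_q$ is a general monotone function.} Here I would construct a coupling of $G_0^{\mathrm{prior}}$ and $G_0^{\mathrm{scratch}}$ under which $|V(G_0^{\mathrm{prior}})|\le |V(G_0^{\mathrm{scratch}})|$ and $|E(G_0^{\mathrm{prior}})|\le |E(G_0^{\mathrm{scratch}})|$ hold almost surely. Monotonicity then gives $C(G_0^{\mathrm{prior}};q)\le C(G_0^{\mathrm{scratch}};q)$ pointwise, and taking expectations finishes the argument.

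The main obstacle is that Case 2 does not follow from the stated hypotheses. Inequalities between \emph{expected} sizes do not imply that $\mathbb{E}[\varphi_q(\cdot)]$ is ordered when $\varphi_q$ is nonlinear. For example, a convex $\varphi_q$ can penalize a prior-initialized graph whose size has smaller mean but larger variance. My first attempt would therefore be to strengthen the hypothesis to (joint) first-order stochastic dominance of the size vector of $G_0^{\mathrm{scratch}}$ over that of $G_0^{\mathrm{prior}}$. By Strassen's theorem this dominance yields exactly the monotone coupling needed in Case 2. One scenario gives the coupling for free: the scratch backbone starts from the complete graph on the role pool, and the prior-initialized graph is a subgraph of it. In that case the size inequalities hold deterministically.

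Failing a strengthened hypothesis, I would state the proposition under the affine cost model of Case 1. That model is the one used in Proposition~\ref{prop:token_total}, and it makes the mean-only hypothesis sufficient.
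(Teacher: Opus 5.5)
Your analysis is correct, and it is more careful than the paper's own proof. That proof says only that, by Assumption~\ref{asm:token_mono}, smaller or equal node and edge counts imply no greater token cost, and that taking expectations preserves the inequality. This is your Case~2 with the coupling silently assumed. Monotonicity is applied to realized sizes, which requires $|V(G_0^{\mathrm{prior}})|\le|V(G_0^{\mathrm{scratch}})|$ and $|E(G_0^{\mathrm{prior}})|\le|E(G_0^{\mathrm{scratch}})|$ almost surely, whereas the hypotheses only order the means.

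This gap cannot be closed, because the proposition as stated is false. Take $C(G;q)=|V(G)|^2$, which satisfies Assumption~\ref{asm:token_mono}. Let $G_0^{\mathrm{scratch}}$ be two isolated nodes, and let $G_0^{\mathrm{prior}}$ be a single node or three isolated nodes with probability $1/2$ each. Both hypotheses hold (expected node counts $2\le 2$, expected edge counts $0\le 0$). Yet $\mathbb{E}[C(G_0^{\mathrm{prior}};q)]=5>4=\mathbb{E}[C(G_0^{\mathrm{scratch}};q)]$, which is exactly the variance effect you describe.

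Adding Assumption~\ref{asm:token_size} does not rescue the statement. Over the finitely many graphs on a role pool of size $N$, every cost is bounded, so that assumption holds trivially with $a_V=a_E=0$ and $b=\max_G C(G;q)$. As an upper bound, it supplies none of the linearity your Case~1 needs.

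Both of your repairs are valid.
\begin{itemize}
\item If the affine model holds with equality, linearity of expectation makes the mean-only hypotheses sufficient.
\item For general monotone costs, state the strengthened hypothesis as the usual multivariate stochastic order: $\mathbb{E}f(X)\le\mathbb{E}f(Y)$ for every coordinatewise non-decreasing $f$, applied jointly to the pair $(|V|,|E|)$ as you rightly insist. The conclusion is then immediate with $f=\varphi_q$. Strassen's theorem is only needed if you define dominance via upper sets and want an explicit coupling.
\end{itemize}

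Your subgraph scenario is the most relevant one for the paper. Dense-start backbones such as AgentPrune and AgentDropout typically begin from a complete graph over their agents. It has one further advantage: your reduction $C(G;q)=\varphi_q(|V(G)|,|E(G)|)$ is only one reading of Assumption~\ref{asm:token_mono}. If that assumption is read instead as monotonicity under adding nodes or edges, ordered sizes do not order costs even pointwise, while subgraph inclusion still does.
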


\begin{proof}
By Assumption~\ref{asm:token_mono}, smaller or equal node and edge counts imply no greater token cost. Taking expectations preserves the inequality.
\end{proof}

\paragraph{Interpretation.}
These results highlight two complementary efficiency effects. First, under Assumption~\ref{asm:contraction}, better initialization reduces the number of refinement rounds needed to achieve a target utility. Second, if the initialized graph is also sparser, the per-round token cost decreases. Together, these results provide analytical support for the efficiency trends observed in our experiments.

\end{document}